\documentclass{article}

\usepackage{microtype}
\usepackage{graphicx}
\usepackage{subcaption}
\usepackage{booktabs}
\usepackage{enumitem}
\usepackage{hyperref}
\usepackage{tikz-cd,xcolor}
\usepackage{mathtools}
\usepackage[export]{adjustbox}
\usepackage[accepted]{icml2026}
\usepackage{amsmath}
\usepackage{amssymb}
\usepackage{mathtools}
\usepackage{amsthm}
\usepackage{booktabs}
\usepackage{pifont}

\usepackage[capitalize,noabbrev]{cleveref}
\crefname{assumption}{Assumption}{Assumptions}
\Crefname{assumption}{Assumption}{Assumptions}

\def\c{{c}}
\def\d{{d}}
\def\f{{\phi^{-1}_\T}}
\def\g{{g}}

\def\h{{h}}
\def\hcls{{h_\text{cls}}}
\def\hconf{{h_\text{conf}}}

\def\p{{p}}
\def\t{{t}}

\def\w{{w}}
\def\x{{x}}

\def\C{{C}}
\def\D{{D}}
\def\F{{F}}

\def\T{{T}}

\def\Z{{Z}}

\theoremstyle{plain}
\newtheorem{theorem}{Theorem}[section]

\newtheorem{lemma}[theorem]{Lemma}

\theoremstyle{definition}
\newtheorem{definition}[theorem]{Definition}
\newtheorem{assumption}[theorem]{Assumption}
\theoremstyle{remark}

\icmltitlerunning{Latent Structure Emergence in Diffusion Models via Confidence-Based Filtering}

\begin{document}

\twocolumn[
  \icmltitle{
  Latent Structure Emergence in Diffusion Models\\
  via Confidence-Based Filtering
  }
  \icmlsetsymbol{equal}{*}
    
  \begin{icmlauthorlist}
    \icmlauthor{Wei Wei}{equal,Pitt}
    \icmlauthor{Yizhou Zeng}{equal,Pitt}
    \icmlauthor{Kuntian Chen}{Pitt}
    \icmlauthor{Sophie Langer}{RUB}
    \icmlauthor{Mariia Seleznova}{LMU}
    \icmlauthor{Hung-Hsu Chou}{equal,Pitt}
  \end{icmlauthorlist}

  \icmlaffiliation{Pitt}{Department of Mathematics, University of Pittsburgh, USA}
  \icmlaffiliation{RUB}{Faculty of Mathematics, Ruhr-Universit{\"a}t Bochum, Germany}
  \icmlaffiliation{LMU}{Department of Mathematics, Ludwig-Maximilians-Universit{\"a}t M{\"u}nchen, Germany}

  \vskip 0.3in
]

\printAffiliationsAndNotice{}

\begin{abstract}
Diffusion models rely on a high-dimensional latent space of initial noise seeds, yet it remains unclear whether this space contains sufficient structure to predict properties of the generated samples, such as their classes. In this work, we investigate the emergence of latent structure through the lens of confidence scores assigned by a pre-trained classifier to generated samples. We show that while the latent space appears largely unstructured when considering all noise realizations, restricting attention to initial noise seeds that produce high-confidence samples reveals pronounced class separability. By comparing class predictability across noise subsets of varying confidence and examining the class separability of the latent space, we find evidence of class-relevant latent structure that becomes observable only under confidence-based filtering. As a practical implication, we discuss how confidence-based filtering enables conditional generation as an alternative to guidance-based methods.
\end{abstract}

\section{Introduction}\label{sec1:intro}

Diffusion models ~\cite{dicksteininitialdiffusion,ho2020denoising} and their variations ~\cite{rombach2022high,yangconsistencymodel} have achieved remarkable success, yet their internal structure remains only partially understood. In particular, the relationship between the initial noise seeds in the latent space and their corresponding generated samples is poorly characterized. Recent work, such as~\citet{XuZhangShi2025}, has shown that selecting particular seeds can greatly improve the quality of generated images, highlighting the importance of understanding the latent structure underlying diffusion-based generation.

A major difficulty in this analysis is the stochasticity of the sampling process. In stochastic samplers such as DDPM, generation depends not only on the initial noise seeds but also on additional noise injected at intermediate steps, obscuring the correspondence between latent variables and generated samples. In contrast, deterministic diffusion models such as DDIM~\cite{song2020ddim} allow an one-to-one relation between seeds and their corresponding samples. Consequently, to facilitate a cleaner analysis we focus on DDIM and the following questions:

\begin{enumerate}
    \item Can properties of generated samples (e.g., class labels) be predicted from their initial seeds?\label{Q1}
    \item Does such predictability reflect nontrivial structure in the latent space?\label{Q2}
\end{enumerate}

We find that both questions admit affirmative answers when we focus on seeds and samples associated with higher classifier confidence, with the class label as the property of interest. In particular, filtering seeds by confidence reveals that high-confidence seeds are more predictable in terms of their class labels and exhibit clearer structure in the latent space. These observations suggest that diffusion models encode class-relevant structure that becomes apparent only through confidence-based filtering.

Moreover, the principles highlighted in this work suggest a new approach to conditional generation based on confidence-based filtering. The method trains a confidence function and a classifier on seeds, and generates samples only from the ones that is both high-confidence and belongs to the desired class. This approach requires no modification nor retraining of the diffusion model, treating it as a black box, and hence provides a different perspective on conditional generation beyond standard guidance-based methods. 

We summarize our contributions as follows:
\begin{itemize}
    \item We show that certain properties of a generated sample can be predicted from its initial seed prior to generation. This predictability is strong for high-confidence seeds and weak for low-confidence seeds, suggesting that high-confidence seeds exhibit more pronounced structure while low-confidence seeds are largely uninformative.
    \item We investigate the latent-space structure induced by confidence-based filtering and observe strong class separability for high-confidence seed, in contrast to little or no separability for low-confidence seed. This structural distinction helps explain the enhanced predictability associated with high-confidence seed.
    \item We demonstrate how these observations motivate a prototype conditional generation method that does not require modifying or retraining the underlying generative model. We further discuss its strengths, limitations, and potential, and outline directions for future research.
\end{itemize}
For notational simplicity, we refer to the initial noise seed simply as noise or seed throughout the paper.

\section{Related Work}

Several works aim to improve diffusion sampling by studying or shaping the intermediate representation induced by diffusion models. One line of work modifies latent spaces during training to impose desireable geometric and semantic properties. For instance, Diffusion Autoencoders \cite{Preechakul_2022_CVPR} combine diffusion probabilistic models with autoencoder architectures to induce latent spaces that become progressively disentangled as noise increases, such that nearby latent points increasingly correspond to semantically similar samples. A similar approach is proposed by \citet{ChoRaviHarikumarKhucSinghLuInouyeKale2023}, who learn two disentangled latent spaces simultaneously: one capturing structural or content-related features and another modeling global semantic attributes.

In more recent work, reward-based sampling has been used to improve image generation by aligning quality metrics \cite{lin2024evaluating}, user preferences \cite{kirstain2023pickapic_neurips} or other reward signals \cite{kim2025testtime}. In this direction, \citet{yeh2025trainingfree} introduce Demon, a training-free inference-time method that optimizes the sampling noise itself to align pretrained diffusion models with arbitrary (possibly non-differentiable) reward functions. By iteratively resampling and scoring noise configurations, Demon biases the denoising trajectory toward high-reward regions. While related in spirit, our approach differs in that it uses a classifier to assign confidence scores to regions of the latent space, enabling targeted sampling without optimizing noise trajectories.  

Another line of work aims to control the sampling process itself. One possible way is to guide the generation process by the gradient of a pre-trained classifier to achieve condition generation, which is called the Classifier Guidance ~\cite{dhariwal2021diffusion}. \citet{li2022upaintingunifiedtexttoimagediffusion} combined Classifier Guidance with a pre-trained Transformer language model for text-to-image generation, and \citet{wallace2023end} improved this guidance method by directly optimizing diffusion latent variables.  This idea goes beyond conditional generation. For instance, \citet{kim2023refining} uses discriminators to guide pretrained score-based diffusion models towards more realistic samples. More recently, Diffusion Rejection Sampling (DiffRS), introduced by \citet{na2024diffusion}, interprets the reverse diffusion process as an approximate sampler and uses a time-dependent discriminator to perform rejection sampling at each denoising step, yielding samples closer to the true data distribution. 

An alternative approach is to embed the labels in the training phase, and use a classifier-free guidance (CFG)~\cite{ho2022classifier,nichol2021glide}. Some recent improvements include the CFG++ model  ~\cite{chung2024cfgmanifoldconstrainedclassifierfree} that addresses mode collapse, the adaptive guidance ~\cite{castillo2023adaptiveguidancetrainingfreeacceleration} enabling faster generation, and more theoretical interpretations of CFG ~\cite{bradley2024classifierfreeguidancepredictorcorrector, pavasovic2025classifierfreeguidancehighdimensionalanalysis}.

Unlike existing methods that modify denoising steps, our approach selects latent regions prior to sampling, without altering the reverse diffusion dynamics.

\section{Background and Formulation}

\subsection{Diffusion Model Preliminaries}

Let $\mathcal{D}$ and $\mathcal{Z} \sim \mathcal{N}(0,I)$ be a data distribution and a latent distribution on $\mathbb{R}^\d$, respectively. A standard diffusion model can be viewed as reversing a diffusion process $\x_\t:[0,\T]\to\mathbb{R}^\d$ that evolves from $\x_0\sim\mathcal{D}$ to $\x_\T\sim\mathcal{Z}$, and is described by the SDE
\begin{equation}\label{diffusion-SDE}
    d\x_\t = \mu(\t,\x_\t)\, d\t + \sigma(\t)\, d\w_\t,
\end{equation}
where $\w_\t$ denotes a Wiener process.

In the distributional sense, by the Fokker--Planck equation, this SDE is equivalent to the ODE
\begin{equation}\label{diffusion-ODE}
    d\x_\t = \left(\mu(\t,\x_\t) - \frac{1}{2} \sigma(\t)^2 \nabla \log \p_\t(\x_\t)\right) d\t,
\end{equation}
where $\p_\t(\x)$ is the probability flow induced by $\x_\t$, providing a deterministic counterpart of~\eqref{diffusion-SDE}. For example, in the variance-preserving diffusion process, i.e.\ $\mu(\t) = \frac{1}{2} \beta(\t)$ and $\sigma(\t) = \sqrt{\beta(\t)}$, reversing the above stochastic and deterministic formulations yields DDPM~\cite{ho2020denoising} and DDIM~\cite{song2020ddim}, respectively. However, although DDPM and DDIM share the same distribution, their particle trajectories differ substantially. In particular, under DDIM each initial noise seed deterministically corresponds to a unique generated sample, whereas this correspondence does not hold for DDPM due to additional stochasticity injected at intermediate sampling steps. Consequently, predictability is well defined only for DDIM, which is therefore the focus of our study. 

\subsection{Latent Classification}

Let $\f:\mathbb{R}^\d \to \mathbb{R}^\d$ denote a deterministic diffusion model mapping an initial seed to its corresponding generated sample (the motivation for this notation will become clear shortly), and let $\h:\mathbb{R}^\d \to \mathbb{R}^\C$ be a pretrained classifier, where $\C$ is the number of classes and the $\c$-th entry of $\h(x)$ represents the probability that $x$ belongs to class $\c$. For notation simplicity, we denote $[\C]=\{1,\ldots,\C\}$.

To answer Question~\ref{Q1} in \cref{sec1:intro}, we need to determine whether the composition $\h \circ \f$ admits an efficient surrogate $\g$ acting directly on the latent space, such that $\g\approx\h\circ\f$ while avoiding the cost of evaluating the full generative map $\f$, as illustrated in \eqref{eq:generation+classifier}.

\begin{equation}\label{eq:generation+classifier}
\begin{tikzcd}[row sep=huge, column sep=huge]
  & \text{class vector (logit)} \\
  \text{latent }\mathcal{Z}
    \arrow[r, "\text{generation }\f"]
    \arrow[ru, "\text{latent classifier }\,\g\,\approx\,\h \circ \f"]
  &
  \text{data }\mathcal{D}
    \arrow[u, "\text{classifier }\h"']
\end{tikzcd}
\end{equation}

Due to the isotropic nature of the latent distribution, which lacks preferred directions, it may initially appear difficult to extract class information directly from the latent space without generating the corresponding sample. However, from the ODE perspective, such prediction is not only possible but also uniquely determined under suitable regularity assumptions.

\begin{lemma}\label{prob-flow}
Suppose 
\begin{equation}\label{F}
    \F(\t,\x) := \mu(\t,\x) - \frac{1}{2} \sigma(\t)^2 \nabla \log \p_\t(\x)
\end{equation}
is continuous on $[0,\T] \times \mathbb{R}^d$ and locally Lipschitz in $\x$, uniformly in $\t$. Then \eqref{diffusion-ODE} admits a unique regular flow map $\phi_\t(\x): [0,\T] \times \mathbb{R}^\d \rightarrow \mathbb{R}^\d$, i.e.,
\begin{equation}\label{flow-map}
\begin{cases}
    &\frac{d}{d\t} \phi_\t(\x) = \mu(\t,\phi_\t(\x)) - \frac{1}{2} \sigma(\t)^2 \nabla \log \p_\t(\phi_\t(\x)),\\
    &\phi_0(\x) = \x.
\end{cases}
\end{equation}
Then the corresponding probability flow $p_t(x)$ satisfies
\begin{equation}
    \p_\t(\x) = \left|\det(\nabla \phi_\t^{-1}(\x))\right|\, \p_0(\phi_\t^{-1}(\x)).
\end{equation}
\end{lemma}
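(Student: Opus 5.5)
The proof has two parts: well-posedness of the flow, then the density formula via the continuity equation and the method of characteristics.

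\textbf{Existence, uniqueness, regularity of the flow.} By the Picard--Lindel\"of theorem, for each initial point $\x$ there is a unique local solution of \eqref{flow-map}, since $\F$ is continuous and locally Lipschitz in $\x$ uniformly in $\t$. To extend the solution to all of $[0,\T]$, I would argue that trajectories cannot escape to infinity in finite time. The cleanest route is to take this as implicit in the hypotheses; alternatively, one can use the linear growth of $\mu$ in the variance-preserving case together with the marginal structure of $\p_\t$. This blow-up exclusion is the step I expect to be the main obstacle, because local Lipschitz continuity alone does not prevent blow-up. I would state explicitly that global existence on $[0,\T]$ is assumed, or that it follows from a linear growth bound on $\F$ via Gr\"onwall's inequality.

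Given global existence:
\begin{itemize}
\item Uniqueness makes $\phi_\t$ injective.
\item Running the ODE backward from time $\t$ gives its inverse, so $\phi_\t$ is a bijection.
\item Continuous dependence on initial data (Gr\"onwall) makes $\phi_\t$ a homeomorphism.
\item With differentiability of $\F$ in $\x$, standard ODE theory makes $\phi_\t$ a $C^1$ diffeomorphism, with $\nabla\phi_\t$ solving the variational equation $\frac{d}{d\t}\nabla\phi_\t = \nabla_\x \F(\t,\phi_\t)\nabla\phi_\t$.
\end{itemize}
Since the lemma asserts a regular flow with a Jacobian, I would phrase the argument with this differentiability (locally Lipschitz gives a.e.\ differentiability by Rademacher's theorem, but $C^1$ is cleaner).

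\textbf{Density formula.} By the Fokker--Planck computation underlying \eqref{diffusion-ODE}, the marginals $\p_\t$ of the SDE solve the continuity equation $\partial_\t \p_\t + \nabla\cdot(\p_\t \F(\t,\cdot)) = 0$. Define the candidate $q_\t := (\phi_\t)_\#\p_0$. By the change of variables formula for the diffeomorphism $\phi_\t$, its density is
\[
q_\t(\x) = \left|\det \nabla\phi_\t^{-1}(\x)\right| \p_0(\phi_\t^{-1}(\x)).
\]
Next I show that $q_\t$ also solves the continuity equation with $q_0=\p_0$. For a test function $\varphi$, differentiate $\int \varphi(\phi_\t(\x))\p_0(\x)\,d\x$ in $\t$; this gives $\int \nabla\varphi\cdot \F(\t,\cdot)\, q_\t$, which is the weak form of the continuity equation.

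It remains to conclude $q_\t=\p_\t$ from uniqueness of weak solutions of the continuity equation with a locally Lipschitz velocity field. Under the assumptions this follows from the characteristic representation itself: any weak solution is transported along the unique flow, which is the superposition principle in the Ambrosio setting. More elementarily, one can run the duality argument of solving the backward transport equation for test functions along $\phi$.
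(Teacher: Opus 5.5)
Your proof is correct, given the global-existence and $C^1$ upgrades you flag. The paper's statement silently needs these too: local Lipschitz continuity does not rule out blow-up, and a mere homeomorphism has no Jacobian. Your route, however, is genuinely different from the paper's and more complete.

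The paper takes the transport identity $\int_{\phi_t(U)}p_t\,dx=\int_U p_0\,dx$ as the \emph{definition} of the probability flow. It then asserts that $\phi_t$ is a homeomorphism and applies the change-of-variables formula. That is only your middle step, computing the density of $q_t=(\phi_t)_\#p_0$.

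You instead prove the identification $q_t=p_t$ with the marginals of \eqref{diffusion-SDE}. You check that $q_t$ solves the continuity equation to which the Fokker--Planck equation reduces, and then invoke uniqueness for that equation. This identification is exactly the nontrivial content hidden in the paper's ``by definition''. Your argument therefore actually justifies replacing \eqref{diffusion-SDE} by \eqref{diffusion-ODE} at the level of densities, at the price of importing a uniqueness theorem for the continuity equation. The paper's version is shorter but leaves that step as an assumption.

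One point to tighten: the global existence you assume must be two-sided. Solutions through every $(s,y)\in[0,T]\times\mathbb{R}^d$ must exist on all of $[0,T]$, not just solutions started at time $0$. Forward existence from time $0$ does not make $\phi_t$ surjective. For $\dot x=-x^3$, every forward solution is global, yet $\phi_t(\mathbb{R})=(-1/\sqrt{2t},1/\sqrt{2t})$, so ``running backward gives the inverse'' fails.

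The same two-sided property is what makes your backward-transported test functions $\varphi\circ\Phi_{s\to t}$ compactly supported in the duality argument, where $\Phi_{s\to t}$ is the flow from time $s$ to time $t$. Without it, you would need an integrability condition such as $\int_0^T\!\int|F|\,p_t\,dx\,dt<\infty$, as in the Ambrosio--Gigli--Savar\'e representation result. Your linear-growth/Gr\"onwall option supplies global existence in both time directions and closes this.
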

In particular we have
\begin{equation}\label{latent-data}
    \mathcal{Z}(\x_\T) = \left|\det(\nabla \phi_\T^{-1}(\x_\T))\right|\, \mathcal{D}(\phi_\T^{-1}(\x_\T)=\x_0).
\end{equation}

Given that $\f$ is well-defined, we can now examine how it transforms the latent distribution into the data distribution for each class. Let $\D := \mathrm{supp}\mathcal{D}$ denote the (closed) support of $\mathcal{D}$ and decompose it according to $\D = \cup_{\c\in[\C]} \D_\c$, where each $\D_\c$ represents the support region of each class. We are interested in how $\D_\c$ transforms under the flow map.

\begin{definition}[Data and latent class]
     The \textit{data classes} $\left\{\mathcal{D}_\c\right\}_{\c\in[\C]}$ and the \textit{latent classes} $\left\{\mathcal{Z}_\c\right\}_{\c\in[\C]}$ associated with $\mathcal{D}$ are defined as
     \begin{equation}
         \mathcal{D}_\c := \mathcal{D}(\cdot \mid \D_\c),\quad
         \mathcal{Z}_\c := \mathcal{Z}(\cdot \mid \Z_\c)
     \end{equation}
     where $\Z_\c := \phi_T(\D_\c)$.
\end{definition}

By \cref{prob-flow} one obtains the following theorem.
\begin{theorem}\label{latent-induction}
    Let $\Z := \mathrm{supp}\mathcal{\Z}$. Then $\Z$ admits the decomposition $\Z = \cup_{\c} \Z_\c$, and
    \begin{equation}
        \mathcal{Z}_\c = |\det \circ \nabla \phi_T^{-1}| \mathcal{D}_\c \circ \phi_\T^{-1}
    \end{equation}
    is an explicit function based on $\mathcal{D}_\c$ and $\phi_\T^{-1}$. Moreover, if each $\D_\c$ is connected and the sets $\{\D_\c\}_{\c\in[\C]}$ are mutually disjoint, then so are $\Z_\c$ and the sets $\{\Z_\c\}_{\c\in[\C]}$.
\end{theorem}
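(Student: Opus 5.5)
The plan is to derive everything from \cref{prob-flow}, using that $\phi_\T$ is a homeomorphism of $\mathbb{R}^\d$. Under the stated assumptions the flow map is unique, and the time-reversed ODE has the same regularity, so $\phi_\T$ is a bijection with continuous inverse $\phi_\T^{-1}$. I will additionally assume, as is implicit in the use of $\nabla\phi_\T^{-1}$ in \cref{prob-flow}, that $F$ is $C^1$ in $\x$, so that $\phi_\T$ is a $C^1$ diffeomorphism. All three claims then follow from facts about homeomorphisms and the change-of-variables formula \eqref{latent-data}.

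\textbf{Decomposition.} By \eqref{latent-data}, $\mathcal{Z}(z)>0$ exactly when $\mathcal{D}(\phi_\T^{-1}(z))>0$, because the Jacobian determinant of a diffeomorphism never vanishes. Hence the positivity set of $\mathcal{Z}$ is the image under $\phi_\T$ of the positivity set of $\mathcal{D}$. Since $\phi_\T$ is a homeomorphism it commutes with closures, which gives $\Z=\phi_\T(\D)=\phi_\T\big(\cup_\c \D_\c\big)=\cup_\c \phi_\T(\D_\c)=\cup_\c\Z_\c$. (For the Gaussian, $\Z=\mathbb{R}^\d$, which is consistent with this.)

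\textbf{Density formula.} For $z\in\Z_\c$, write $\mathcal{Z}_\c(z)=\mathcal{Z}(z)\mathbf{1}_{\Z_\c}(z)/\mathcal{Z}(\Z_\c)$. Substituting \eqref{latent-data} and noting that $\mathbf{1}_{\Z_\c}(z)=\mathbf{1}_{\D_\c}(\phi_\T^{-1}(z))$, I get $|\det\nabla\phi_\T^{-1}(z)|\,\mathcal{D}(\phi_\T^{-1}(z))\mathbf{1}_{\D_\c}(\phi_\T^{-1}(z))/\mathcal{Z}(\Z_\c)$. Next I verify the normalisation: changing variables $z=\phi_\T(x)$ in $\int_{\Z_\c}\mathcal{Z}$ gives $\mathcal{Z}(\Z_\c)=\mathcal{D}(\D_\c)$, since the Jacobian factors cancel. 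Combining these yields $\mathcal{Z}_\c=|\det\circ\nabla\phi_\T^{-1}|\,\mathcal{D}_\c\circ\phi_\T^{-1}$. This is the pushforward of $\mathcal{D}_\c$ by $\phi_\T$, and it is explicit in $\mathcal{D}_\c$ and $\phi_\T^{-1}$.

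\textbf{Connectedness and disjointness.} The continuous image of a connected set is connected, so each $\Z_\c=\phi_\T(\D_\c)$ is connected. Injectivity of $\phi_\T$ gives $\phi_\T(\D_\c)\cap\phi_\T(\D_{\c'})=\phi_\T(\D_\c\cap\D_{\c'})=\emptyset$ for $\c\neq\c'$. Closedness also transfers, because $\phi_\T$ is a homeomorphism of the whole space.

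\textbf{Main obstacle.} The hard step is justifying that $\phi_\T$ is a global diffeomorphism. Local Lipschitz continuity gives only local existence, and global existence on $[0,\T]$ for every starting point requires ruling out blow-up. I would first try to extract this from the phrase ``unique regular flow map'' in \cref{prob-flow}. Failing that, I would add a linear growth bound on $F$ and apply Grönwall's inequality to get non-explosion, so that the forward and backward flows are mutual inverses. Differentiability of $\phi_\T^{-1}$ needs $F\in C^1$ in $\x$, which I would state as an explicit assumption. A secondary subtlety is the support identity: it relies on nonvanishing Jacobians and on the fact that homeomorphisms map closures to closures.
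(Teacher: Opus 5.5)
Your proof is correct and follows essentially the same route as the paper. The decomposition, connectedness and disjointness of the $Z_c$ all come from $\phi_T$ being a homeomorphism, and the density formula comes from substituting the change-of-variables identity of the lemma and normalising via $\mathcal{Z}(Z_c)=\mathcal{D}(D_c)$; your extra care about global existence, $C^1$ regularity and the support identity fills in steps the paper simply asserts.
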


In other words, if the diffusion model admits a unique regular flow transporting $\mathcal{D}$ to the latent distribution $\mathcal{Z}$, then the same flow transports each $\mathcal{D}_\c$ to a corresponding latent class distribution $\mathcal{Z}_\c$. Each $\mathcal{Z}_\c$ is supported on the region $\Z_\c=\phi_T(\D_\c)$, and together these regions form a decomposition of the latent support $\Z$. Therefore identifying the function $\g$ is equivalent to identifying the regions $\Z_\c$.

\begin{figure}[t]
\centering
{
\setlength{\tabcolsep}{0pt}
\begin{tabular}{c@{}c@{}c}
  \adjustbox{valign=c}{%
    \begin{tabular}{c}
      \includegraphics[width=0.43\linewidth]{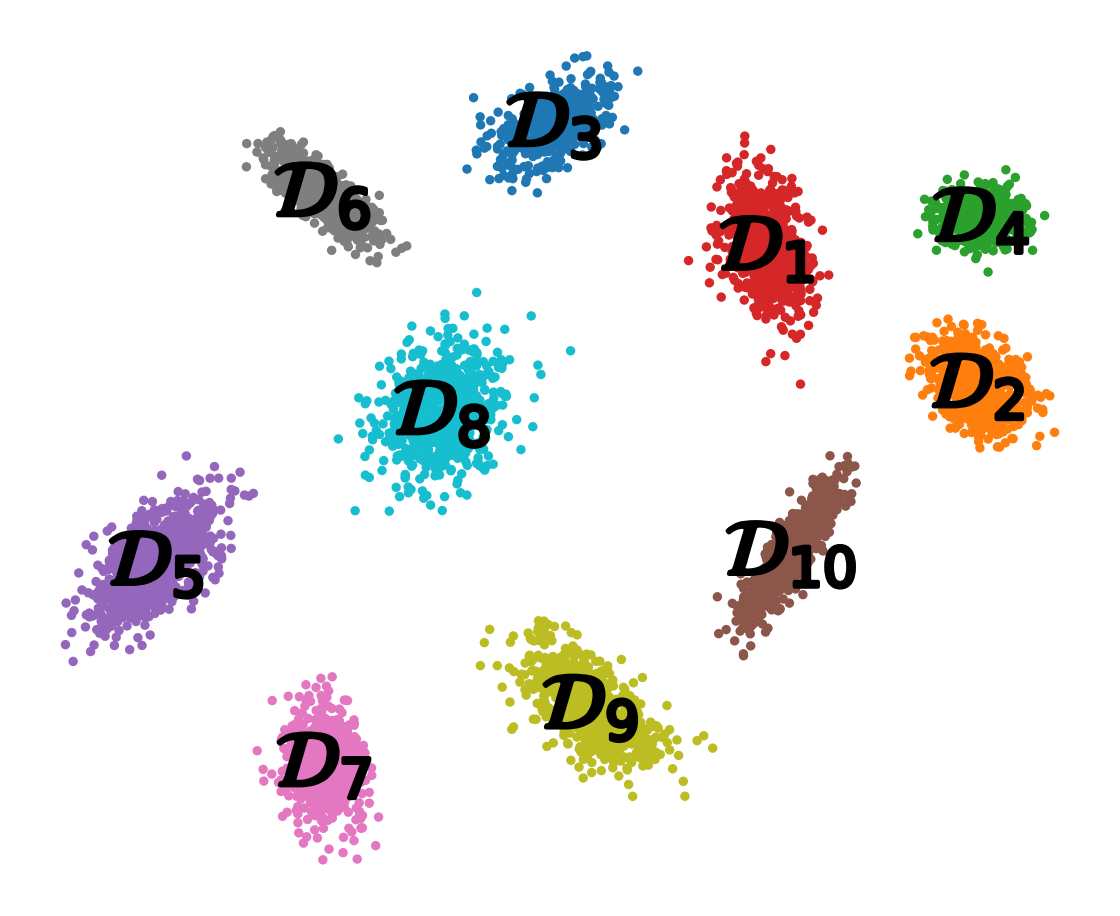}\\[-0.2ex]
      {\scriptsize Data distribution $\mathcal{D}$}
    \end{tabular}
  }
  &
  \hspace{-0.4em}\adjustbox{valign=c}{$
    \begin{array}{c}
      \xrightarrow{\text{diffusion }\phi_\T}\\[0.2ex]
      \xleftarrow[\text{generation }\f]{}
    \end{array}
  $}\hspace{-0.4em}
  &
  \adjustbox{valign=c}{%
    \begin{tabular}{c}
      \includegraphics[width=0.40\linewidth]{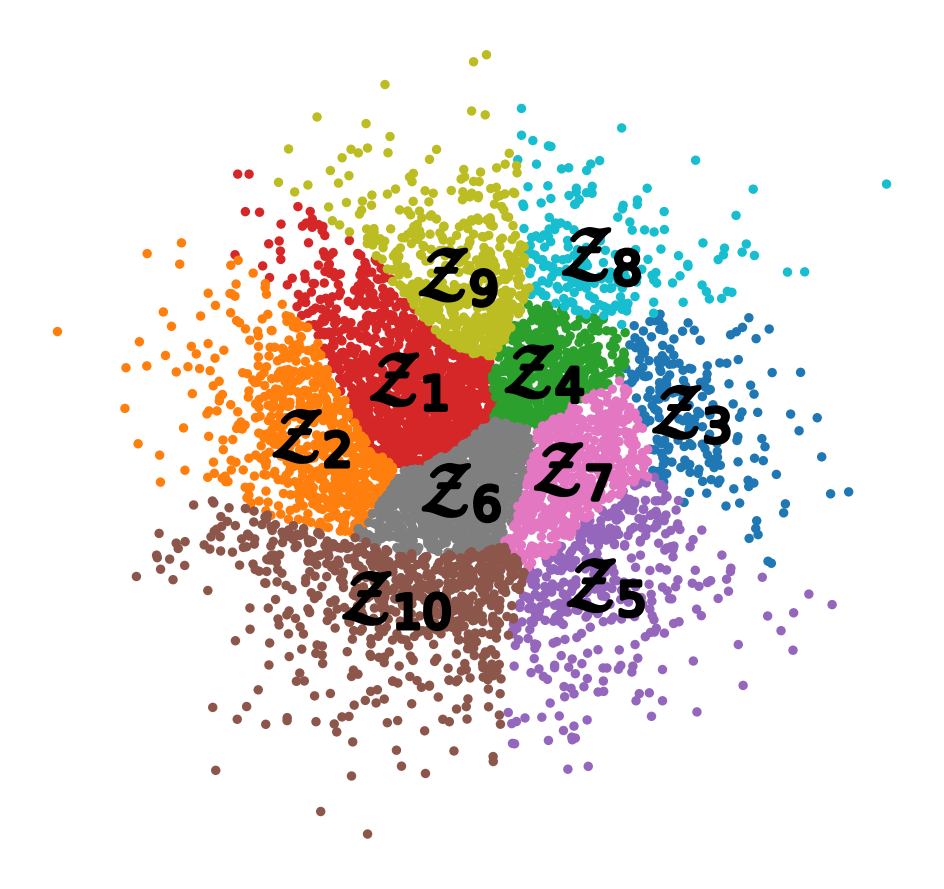}\\[-0.2ex]
      {\scriptsize Latent distribution $\mathcal{Z}$}
    \end{tabular}
  }
\end{tabular}
}
\caption{Correspondence of data classes and latent classes}
\end{figure}

\subsection{Confidence-Based Filtering}
\label{subsec:Confidence-Based Filtering}

In practice, learning over the entire latent space is challenging because the induced latent classes $\mathcal{Z}_\c$'s can substantially overlap when the assumption in Lemma~\ref{prob-flow} fails. A key obstruction arises from the potential blow-up of the score $\nabla \log \p_\t$ in low-density regions, which can destroy the local Lipschitz regularity of the vector field $\F$ in \eqref{F} and invalidate standard uniqueness results for the flow map $\phi_\t$. When uniqueness fails, $\phi_\t$ may become non-injective and hence not globally invertible. Consequently, the pushforward class densities $\mathcal{D}_\c$ to $\mathcal{Z}_\c$ may develop singularities, reflecting the mass concentration—or \emph{mixing phenomenon}—observed in \cref{sec:experiments}, making the learning of $\g$ in \eqref{eq:generation+classifier} infeasible.

To mitigate this issue, we introduce \emph{confidence-based filtering}. Rather than attempting to invert the flow map globally, we restrict attention to regions of the latent space in which the flow map exhibits well-behaved dynamics. We identify such regimes using a notion of confidence, and find that higher-confidence samples and their corresponding seeds are associated with more stable behavior.

\begin{definition}[Label and confidence of seed]
Let $\h:\mathbb{R}^\d\to\mathbb{R}^\C$ to be a classifier mapping a sample to its logit. Define the label function $\hcls:\mathbb{R}^\d\to[\C]$ and confidence function $\hconf:\mathbb{R}^\d\to[0,1]$ as
\begin{align}
    \hcls(\x) &:= \arg\max_{\c} \h_{\c}(\x) = \c^*\\
    \hconf(\x) &:= \h(\x)_{\c^*} - \max_{\c\neq\c^*} \h(\x)_{\c}.
\end{align}
The datum label and confidence are given by $\hcls$ and $\hconf$, respectively, whereas the noise label and confidence are given by $\hcls\circ\f$ and $\hconf\circ\f$, respectively.
\end{definition}

The intuition behind the confidence-based filtering is that a datum (resp.\ seed) with lower confidence is more likely to lie near the boundary of a class region $\D_\c$ (resp.\ $\Z_\c$). Consequently, a low-confidence seed is likely to produces a low-confidence denoised sample, which typically lies in a low-density region of the data distribution, where the score $\nabla \log \p_\t$ may become uncontrollably large. As noted above, trajectories initialized from such data may cause the probability-flow ODE to lose well-posedness, in particular, uniqueness. Consequently, one may expect sharp transitions of the diffusion model in neighborhoods of such seeds~\cite{lobashev2025hessian}, making it difficult to predict the labels based on the seeds. We therefore propose the following methodology and experiments to study this effect.

\section{Methodology}\label{sec4:method}

Motivated by the intuition that confidence plays a key role in predicting seed labels and revealing latent-space structure, we refine the two questions in \cref{sec1:intro} into the following more specific ones.
\begin{enumerate}
    \item Can the label of a seed be predicted, and does it depend on its confidence?\label{Q1'}
    \item Does the latent space restricted to different confidence level exhibits different structure?\label{Q2'}
\end{enumerate}
To answer those questions we propose the following experimental framework.
\begin{enumerate}
    \item \textbf{Predictability:} We generate random seeds and partition them into confidence levels for training and testing. For each level $\ell$, we train a latent classifier $\g_\ell:\mathbb{R}^d\to\mathbb{R}^\C$ using seeds at that level and evaluate it on test sets across all confidence levels. Guided by \cref{subsec:Confidence-Based Filtering}, we expect classifiers trained on high-confidence seeds to achieve superior performance.

    \item \textbf{Latent structure:} Seeds are similarly grouped by confidence level. Within the same level seeds are embedded using a supervised Linear Discriminant Analysis (LDA) projection followed by a nonlinear Uniform Manifold Approximation and Projection (UMAP). As suggested by \cref{subsec:Confidence-Based Filtering}, high-confidence seeds are expected to exhibit clear latent-class structure, whereas low-confidence seeds display more disordered patterns.
\end{enumerate}

Next, we explain the role of the LDA–UMAP visualization pipeline, which we use throughout our experiments to study latent structure emergence.

{\bf Class-aligned projection via LDA.}
We use Linear Discriminant Analysis (LDA) as a supervised diagnostic projection to probe class-aligned structure in latent space. In our LDA–UMAP pipeline, LDA is not used as a classifier but rather as a linear projection that extracts directions maximizing between-class variance relative to within-class variance.

LDA projection is expected to reveal class-aligned separability when latent structure exists, whereas such separability will not emerge—even with labels provided—in the absence of latent structure. In \cref{sec:experiments}, we observe that LDA yields strong separation only at high-confidence levels and collapses at low-confidence levels, despite using the same label information in both cases. This contrast indicates that the observed structure arises from the latent space itself rather than being an artifact of supervision.

{\bf Nonlinear visualization via LDA-UMAP pipeline.}
After the LDA projection, we apply UMAP to embed the noise vectors into two dimensions for visualization. UMAP is chosen for its ability to preserve local neighborhood structure and global topology in high-dimensional data.

As a control, we also apply UMAP directly to raw latent seeds. The direct UMAP produces no class-aligned structure across all confidence levels, indicating that the class-aligned structure is not directly accessible to UMAP in the raw latent space, thereby motivating the use of LDA as a supervised class-aligned projection.

\begin{figure*}[t!]
  \centering
  \begin{subfigure}[t]{0.4\textwidth}
    \centering
    \includegraphics[width=\linewidth]{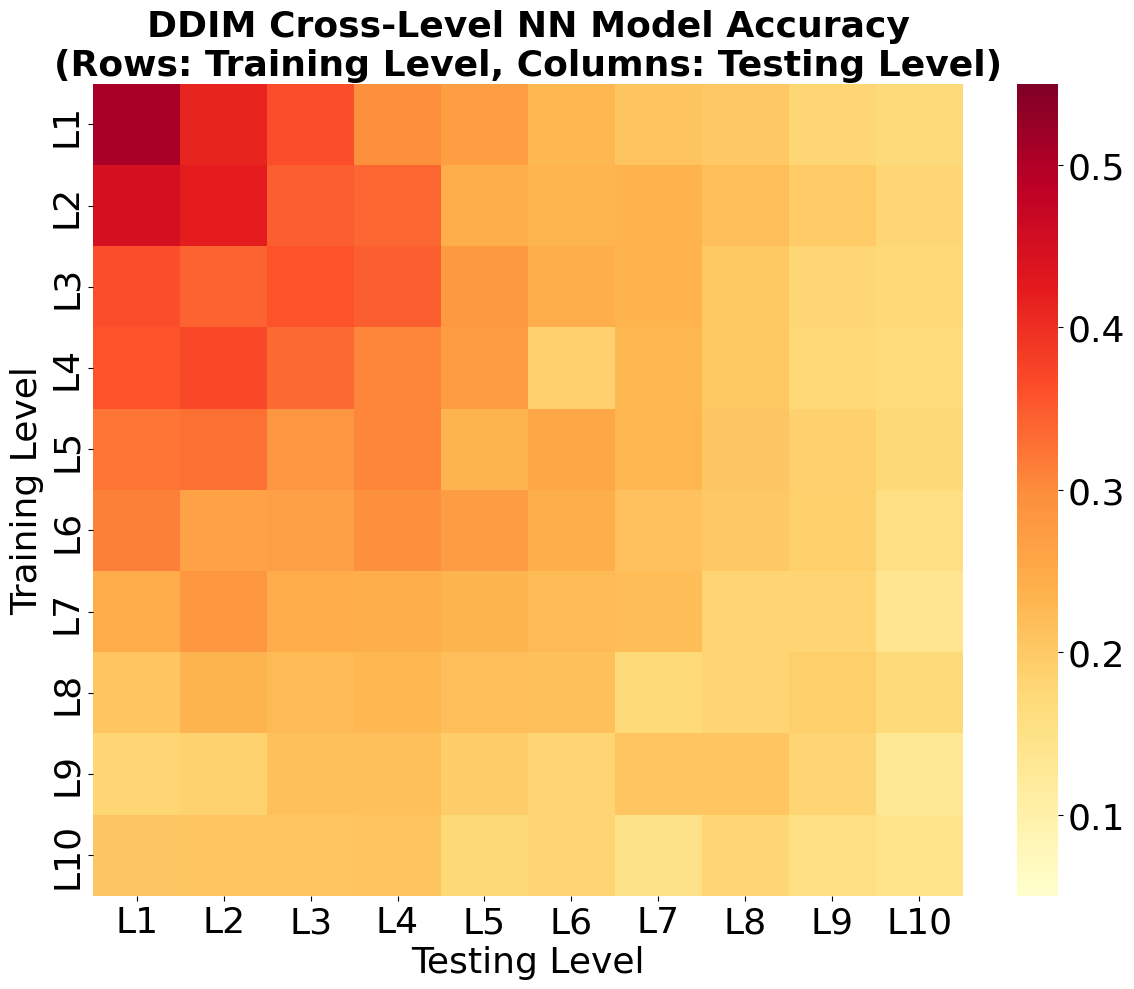}
    \caption{DDIM}
    \label{acc_heatmap_ddim}
  \end{subfigure}
  \hspace{0.06\textwidth}
  \begin{subfigure}[t]{0.4\textwidth}
    \centering
    \includegraphics[width=\linewidth]{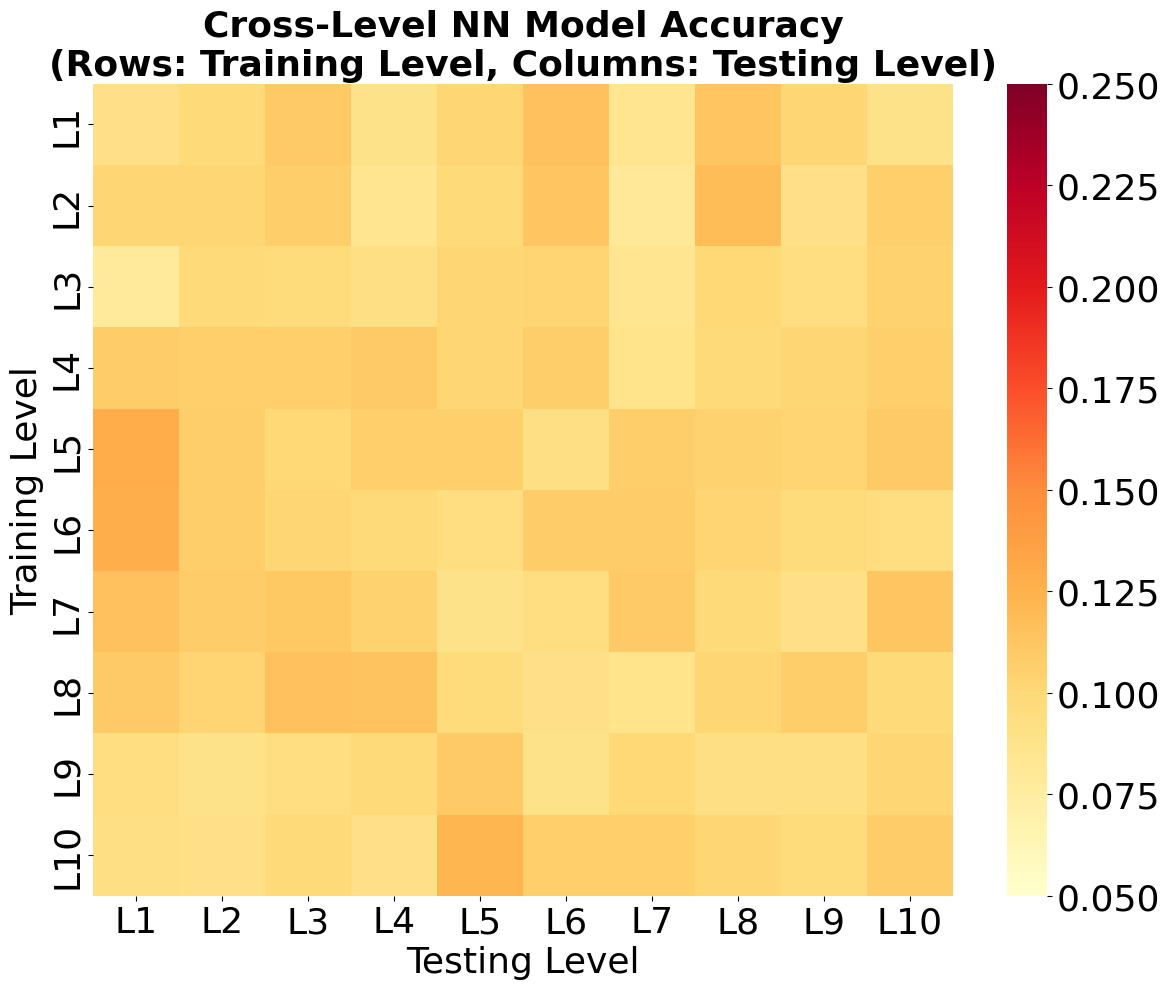}
    \caption{DDPM}
    \label{acc_heatmap_ddpm}
  \end{subfigure}

  \caption{Cross-level classification accuracy heatmaps for diffusion noise across confidence levels through neural networks.
  \textbf{(a) DDIM:} A pronounced high-accuracy region appears in the high-confidence regime, indicating transferable predictability across levels (see \cref{app:heatmap} for per-cell accuracies).
  \textbf{(b) DDPM:} Accuracy is heterogeneous across levels with no recognizable structure (see \cref{app:ddpm} for per-cell accuracies).}
  \label{acc_heatmap}
\end{figure*}

\section{Experiments and Results}\label{sec:experiments}
We now present our main experimental results on predictability and latent-structure emergence across confidence levels. While our experiments focus on relatively small-scale datasets, this choice is intentional, enabling settings that are well understood and amenable to detailed analysis and providing an initial validation of the proposed framework. All experiments use the benchmark dataset in \cref{subsec:benchmark}. We address the two questions from \cref{sec4:method}.
For Question~\ref{Q1'} (predictability), \cref{subsec:predictability} begins with \emph{Cross-Level Label Predictability}, which directly tests whether labels can be inferred from high-confidence noise, and is followed by \emph{Accuracy vs.\ Predicted Confidence}, which evaluates confidence as a practical per-seed signal on newly sampled noise. For Question~\ref{Q2'} (structure emergence), \cref{subsec:structure} starts with the core \emph{LDA--UMAP} analysis that reveals nontrivial latent geometry in confidence-stratified noise, followed by ablations and diagnostics that further probe the observed structure.

\subsection{Models and Benchmark Datasets}\label{subsec:benchmark}
We use MNIST as the data distribution, a pretrained DDIM sampler as the diffusion model, and a LeNet-5 classifier trained on MNIST as the reference labeler. We construct a latent pool by drawing 70{,}000 i.i.d.\ Gaussian seeds from the standard normal latent distribution.

For each seed, we generate a denoised image using the diffusion model and evaluate it with the classifier, recording the predicted logit vector, label, and confidence. We then balance the dataset so that each label appears equally often, leaving 35{,}160 seeds.

To support confidence-controlled experiments, we stratify the balanced seeds into ten confidence levels. Concretely, within each label we sort seeds by confidence and split them into ten equal-sized bins; the $\ell$-th global confidence level is formed by pooling the $\ell$-th bin across all labels; We denote them by \textit{level 1}--\textit{level 10} (high to low confidence).

\begin{figure*}[t!]
    \centering
    \begin{subfigure}[t]{0.7208\textwidth}
        \centering
        \includegraphics[width=\linewidth, height=0.6443\linewidth]{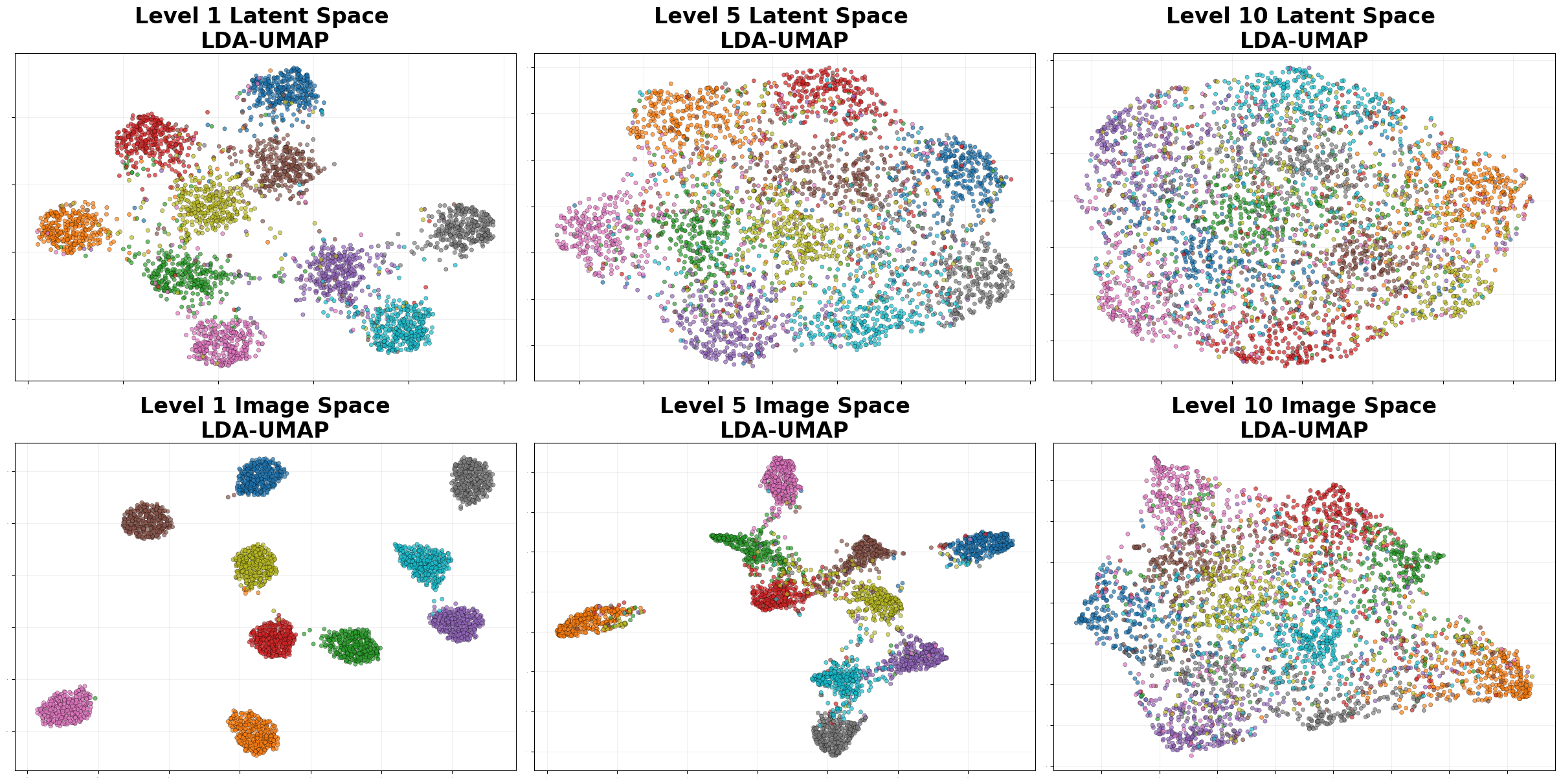}
        \vspace{-5mm}
        \caption{LDA-UMAP embeddings of latent seeds (top) and denoised images (bottom) across confidence levels. Both latent and image space exhibits class-aligned clustering at high confidence that degrades at lower confidence (see \cref{app:lda-umap} for embeddings of latent seeds across all confidence levels).
        }
        \label{lda_umap_a}
    \end{subfigure}  \hfill
    \begin{subfigure}[t]{0.246\textwidth}
        \centering
        \includegraphics[width=\linewidth, height=1.89\linewidth]{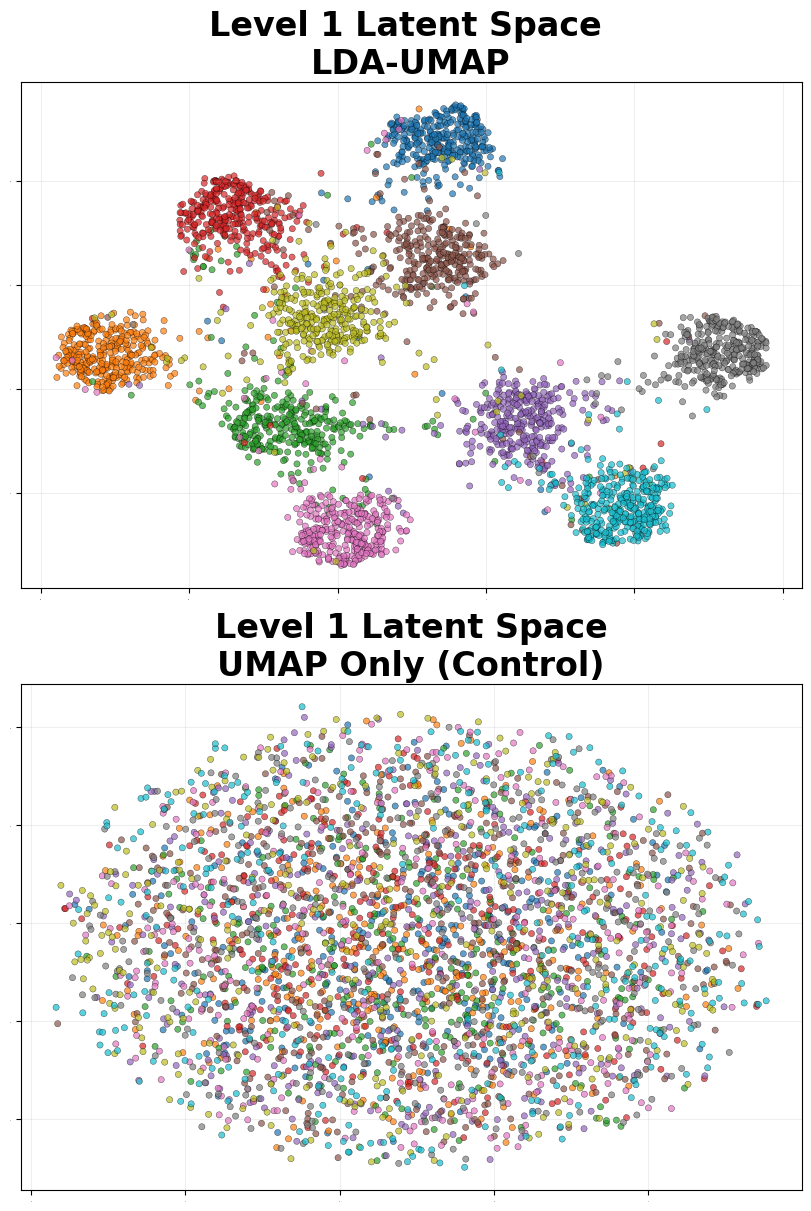}
        \caption{UMAP-only shows no class-aligned structure while LDA--UMAP does.
        }
        \label{lda_umap_b}
    \end{subfigure}
\caption{Structure visualization via projection, with points colored by class label. \textbf{Left:} across confidence levels. \textbf{Right:} across methods.
    }
\label{lda_umap}
\end{figure*}

\subsection{Predictability}\label{subsec:predictability}

\paragraph{Cross-Level Label Predictability.}
We probe the fine structure of the leveled latent space via cross-level predictability. Using the stratified seed sets, we train a separate three-layer MLP (we found the lightweight architecture sufficient; deeper/wider networks yielded only marginal gains) on each confidence level and evaluate it on seeds from every level, yielding a full cross-level train--test accuracy matrix (shown in \cref{acc_heatmap_ddim}; more details in \cref{app:heatmap}).

The heatmap exhibits a pronounced high-accuracy region concentrated in the top-left corner, corresponding to training and testing on high-confidence noises. In contrast, accuracy degrades rapidly away from this region and becomes uniformly low and heterogeneous when either training or testing involves low-confidence levels.
This suggests that meaningful, transferable structure is preserved primarily from high-confidence seeds, whereas low-confidence seeds provide little stable signal and exhibit a weak predictive structure.
In comparison, \cref{acc_heatmap_ddpm} shows the same experiment for DDPM-generated seeds, where no clear cross-level structure emerges, suggesting that increased stochasticity may obscure latent predictability.

\paragraph{Accuracy vs.\ Predicted Confidence (Logit Prediction).}
Motivated by the cross-level heatmap, which indicates that reliable prediction is concentrated in the high-confidence regime, we train a \emph{logit-prediction} model using only the highest-confidence seeds. In this setting, the model takes a seed as input and predicts the classifier's logits for the corresponding generated image; the predicted label and confidence are then derived from these logits via the argmax and the logit margin.

We then evaluate an accuracy--confidence diagnostic on newly sampled noise. Specifically, we draw 5{,}100 fresh seeds, use the trained model to output a label and confidence for each seed, and denoise each seed to verify whether the predicted label matches the classifier label of the generated image. \cref{acc_conf_curve} reports empirical accuracy as a function of predicted confidence.

As expected, accuracy increases monotonically with predicted confidence, while the lowest-confidence bins are both less reliable and more heavily populated. This supports using the predicted confidence as a practical, per-seed reliability signal when operating on randomly sampled noise. See \cref{app:diversity} for additional details.

\begin{figure}[t]
  \centering
  \includegraphics[width=\columnwidth]{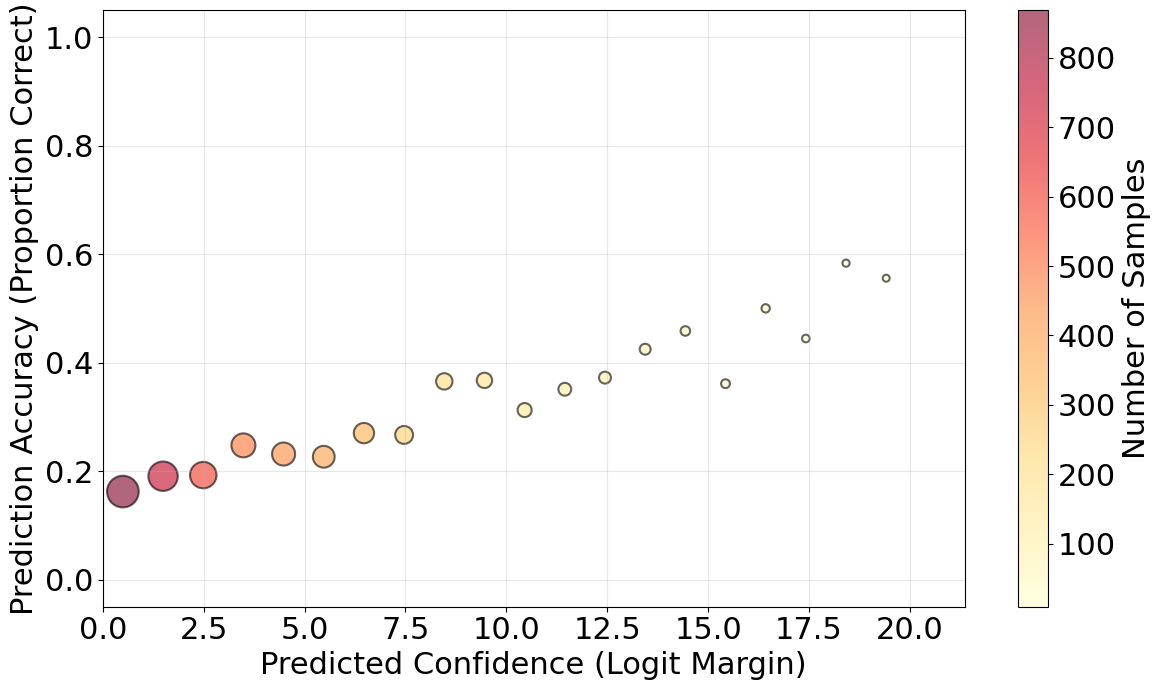}
  \caption{Prediction accuracy as a function of predicted confidence for the logit-prediction approach. Each point aggregates seeds within a confidence range; bubble size indicates the number of seeds in that bin.}
  \label{acc_conf_curve}
\end{figure}

\subsection{Latent Structure Emergence}\label{subsec:structure}

\paragraph{Latent Structure via LDA–UMAP.}
With the LDA--UMAP pipeline introduced in \cref{sec4:method}, we examine the geometric organization of both latent seeds and denoised images across confidence levels to address Question~\ref{Q2'}. For each confidence level, latent seeds are projected via LDA from $\mathbb{R}^{28\times 28}$ to a 9-dimensional subspace and then embedded into two dimensions using UMAP. Note that LDA is used only as a probing projection and not for prediction; if latent space lacked class-aligned structure, LDA would not induce meaningful separation, which is confirmed at low-confidence levels and by UMAP-only controls.

In \cref{lda_umap_a}, we compare latent-space and image-space embeddings. In the latent space (top row), Level~1 seeds exhibit clear class-aligned clustering, Level~5 seeds show degraded organization, and Level~10 seeds appear highly mixed. 
In contrast, the denoised image space (bottom row) exhibits weaker but still discernible class separation at Level~10, indicating that this geometric structure is intrinsic in image space but only partially and conditionally reflected in latent space.
As a control, UMAP applied directly to raw latent seeds shows no class-aligned grouping (bottom of \cref{lda_umap_b}), indicating that class-aligned organization is not directly accessible in the raw latent space and that LDA serves as a class-aligned probing projection rather than a visualization artifact.

Figure~\ref{lda_umap} contrasts (i) confidence levels in latent space, (ii) latent and image-space geometry, and (iii) LDA--UMAP versus direct UMAP embeddings. Together, these comparisons indicate that latent structure is confidence-dependent rather than uniformly preserved by the diffusion flow.

\paragraph{Quantifying Structure Emergence.}
Beyond visualization, we quantify class-aligned structure using the \emph{LDA discriminability score} (see \cref{lda_pca}), defined as the classification accuracy of an LDA model evaluated on held-out latent samples. This metric captures how well the LDA projection separates latent samples by class.

As a baseline, we compare this to the variance explained by a \emph{9-dimensional PCA projection}. The dimension is chosen to match the one used in LDA for fair comparison. Here, the explained PCA variance refers to the fraction of total variance captured by the first 9 principal components; a higher variance indicates a more global geometric structure preserved by the PCA projection.

Table~\ref{tab:pca_lda_scores} shows that the LDA discriminability score decreases monotonically with confidence, indicating that class-aligned latent structure weakens at lower confidence levels. In contrast, the PCA variance explained remains nearly constant across confidence levels, suggesting that overall variance structure does not change with confidence. Hence, LDA provides a meaningful quantitative indicator for the strength of latent structure emergence.
\begin{table}[t]
\centering
\caption{ LDA discriminability score and PCA variance explained by first 9  principal components across representative confidence levels. }
\label{tab:pca_lda_scores}
\begin{tabular}{c|cc}
\toprule
Confidence Level  & LDA Score (\%)  & PCA 
 Variance(\%)\\
\midrule
Level 1 (High)    & {91.91} & 2.34\\
Level 5 (Medium)  & {77.40} & 2.31\\
Level 10 (Low)    & {65.63} & 2.31\\
\bottomrule
\end{tabular}
\label{lda_pca}
\end{table}

\paragraph{Diffusive Nature of Low-Confidence Noises.} Beyond the individual embeddings, we further examine how low-confidence noises are positioned relative to high-confidence structure. As shown in Figure~\ref{lda_umap_fillin}, when Level~10 noises are overlaid onto the LDA--UMAP embedding learned from Level~1 noises, they do not form coherent clusters of their own. Instead, they predominantly populate interstitial regions between class-aligned Level~1 clusters, effectively filling the gaps between them. 
This suggests that low-confidence noises can be viewed as a diffuse extension of the high-confidence latent structure.
\begin{figure}[t]
  \centering
  \includegraphics[width=\columnwidth]{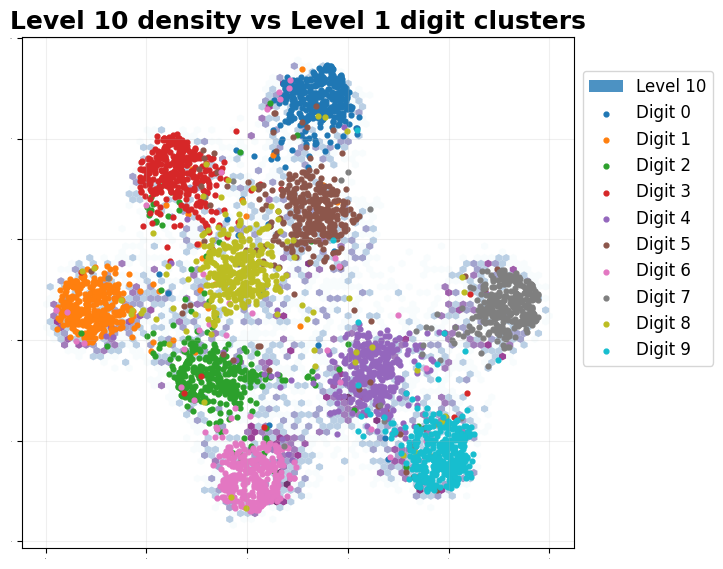}
  \caption{Overlay of low-confidence (Level~10) noises onto the LDA--UMAP embedding learned from high-confidence (Level~1) noises. While Level~1 samples form coherent class-aligned clusters, Level~10 noises do not form distinct groups and instead populate interstitial regions between these clusters.}
  \label{lda_umap_fillin}
\end{figure}

\paragraph{Comparison to Unconditional Latent Classifier.}
The results in  \cref{subsec:structure} indicate  substantial class-aligned geometric structure in the latent space $\mathcal{Z}$ when conditioned by the confidence score. Now we provide some empirical observations on noises $x$ sampled from $\mathcal{Z}$ but not conditioned on confidence.
 While a model trained and tested on high-confidence noises can achieve a test accuracy of 
53.42\%, the same architecture trained and tested on the unconditional dataset exhibits substantially degraded performance (test accuracy 
20.31\%, F1 score 15.24\%).

Furthermore, both the LDA discriminability score and the LDA–UMAP embeddings for unconditional noises are between those of Level 1 and the lowest confidence levels (See Appendix for more details). These observations suggest that while class-aligned structure exists in the latent space, it is diluted and only becomes defined after confidence-based filtering.  

\subsection{Summary of Predictability and Structure Emergence} Taken together, these results reveal a coherent picture of structure emergence in diffusion noise. 
The cross-level accuracy heatmap establishes  partial label predictability in the latent space and characterizes how such predictability is shared across confidence levels.
The LDA–UMAP visualizations further expose the class-aligned geometric organization, indicating a close relationship between label predictability and structure emergence in latent space.

In particular, high-confidence noise distributions exhibit mutually compatible and shared structure, whereas low-confidence distributions are more idiosyncratic and generalize poorly across levels. In this sense, {\bf good noises are all alike, while bad noises are bad in their own ways.}

\section{Implications and Analysis}

Our experimental results not only demonstrate the feasibility of predicting labels from seeds, but also elucidate the underlying mechanism and provide an explicit prediction method. As a practical implication, this enables a conditional generation method as follows.
\begin{enumerate}
    \item Given the diffusion generator and a data classifier, train a latent classifier on high-confidence seeds based on the classifier.
    \item For conditional generation, sample seeds and retain those with both high confidence and the desired label. Generate sample based on those ``good'' seeds.
\end{enumerate}
This approach differs fundamentally from standard guidance-based methods, as it treats the diffusion generator as a black box and requires neither modification nor retraining. Although our focus is on diffusion models, the proposed framework--including theory, experiments, and methodology--extends naturally to more general generative models, as it is largely agnostic to model architecture as long as the following assumptions are satisfied.
\begin{assumption}\phantom{somespaces}
\vspace{-2.75mm}
    \begin{enumerate}
        \item The generative model admits a deterministic generation process.
        \item Either the generation process is invertible globally, or we have a method to identify the regions where the process is locally invertible. 
    \end{enumerate}
\end{assumption}
The first and second impose explicit and implicit restrictions on the generative model, respectively. For diffusion models, DDPM violates the second assumption due to its inherent stochasticity, while the third assumption directs our attention to high-confidence regions and effectively disregards low-confidence ones.

Besides the assumptions, we would also like to list the strengths and limitations of our framework to assist future research or application based on our framework.

\paragraph{Strengths.}
Our framework is generator-agnostic and decouples conditioning from generation, enabling independent analysis and improvement of each component. It relies on objective class-based measures rather than subjective criteria that require costly human annotation. From a theoretical perspective, confidence-based filtering provides a new lens for latent-space analysis, revealing structure that becomes apparent only after appropriate preprocessing. Practically, the resulting conditional generation approach reduces sampling cost by carefully selecting seeds, often achieving high-quality outputs with few generations.

\paragraph{Limitations.}
Our framework assumes a deterministic generation process, which excludes stochastic samplers such as DDPM. Training the latent classifier can be computationally expensive, as it requires generating large numbers of seeds along with their corresponding samples and classifier logits across confidence levels and classes. While the LDA and UMAP visualizations demonstrate latent-class separability, they do not provide an explicit separating function. Finally, while Figure~\ref{acc_conf_curve} empirically illustrates the distribution of seeds across confidence levels, we lack theoretical guarantees on the number of seeds required to obtain high-quality samples.

These limitations naturally motivate several directions for future work, as discussed in \cref{sec:iscussion and Future Work}.

\section{Discussion and Future Work}\label{sec:iscussion and Future Work}

This work introduces a confidence-based framework for analyzing latent structure and enabling post-hoc conditional generation in diffusion models. While our results demonstrate the feasibility and effectiveness of the proposed approach in controlled settings, several directions remain open for future investigation.

A primary theoretical direction is to better understand the emergence of latent-class structure under confidence-based filtering. Our empirical findings suggest a close connection between confidence, invertibility of the generative map, and separability in latent space. Establishing formal guarantees on these relationships—such as conditions for separability, stability of latent flows, and the role of confidence in preventing mass concentration—would provide deeper insight into the mechanisms underlying our framework.

From a practical perspective, extending the framework beyond deterministic samplers to stochastic generation processes such as DDPM remains an important challenge. In addition, reducing the computational cost associated with training latent classifiers, for instance through more efficient seed sampling or adaptive selection strategies, would improve scalability. More broadly, the decoupling of conditioning and generation suggests that the proposed conditioning mechanism may serve as a flexible add-on to existing generative models without requiring architectural modification or retraining.

Overall, our results indicate that confidence plays a central role in revealing class-relevant structure in the latent space of diffusion models. By leveraging confidence-based filtering, we uncover latent structure that is otherwise obscured by the isotropic nature of the noise distribution and demonstrate its utility for post-hoc conditional generation. We hope this perspective encourages further investigation into latent-space analysis and principled conditioning mechanisms for generative models.

\newpage

\section*{Acknowledgment}

Wei Wei acknowledges funding from National Science Foundation under Grant DMS-2219384.

\section*{Impact Statement}

This paper presents work whose goal is to advance the field of machine learning. There are many potential societal consequences of our work, none of which we feel must be specifically highlighted here.

\bibliography{reference_arxiv}
\bibliographystyle{icml2026}

\newpage
\appendix
\onecolumn

\section{Proofs of Theoretical Results}
\subsection{Proof of \cref{prob-flow}}
\begin{proof}
For any open set $U \subset \mathbb{R}^d$, by the definition of probability flow, we have
\begin{equation}
    \int_{\phi_t(U)} p_t(x)\ dx = \int_{U} p_0(x)\ dx.
\end{equation}
Since $\phi_t: \mathbb{R}^d \rightarrow \mathbb{R}^d$ defines a homeomorphism, we apply the change of variable formula for the right hand side:
\begin{equation}
    \int_{U} p_0(x)\ dx = \int_{\phi_t(U)} p_0(\phi_t^{-1}(x)) |\det(\nabla \phi_t^{-1}(x))|\ dx,
\end{equation}
thus,
\begin{equation}
    \int_{\phi_t(U)} p_t(x)\ dx = \int_{\phi_t(U)} p_0(\phi_t^{-1}(x)) |\det(\nabla \phi_t^{-1}(x))|\ dx.
\end{equation}
Because $U$ can be arbitrary, the above immediately implies
\begin{equation}
    p_t(x) = p_0(\phi_t^{-1}(x)) |\det(\nabla \phi_t^{-1}(x))|.
\end{equation}
\end{proof}

\subsection{Proof of \cref{latent-induction}}
\begin{proof}
Since $\phi_T: \mathbb{R}^d \rightarrow \mathbb{R^d}$ is a homeomorphism, from the definition of data class and latent class, it is straightforward to see $Z = \cup_{c\in\mathcal{C}} Z_c$, with each $Z_c$ connected and $Z_c$'s mutually disjoint.

By \cref{prob-flow} and the definition of probability flow, we have
\begin{equation}
\begin{split}
    \mathcal{Z}(x|Z_c) = &\frac{\mathcal{Z}(x)}{\int_{Z_c}\mathcal{Z}(x)\ dx}\\
    = &\frac{|\det(\nabla \phi_T^{-1}(x))| \mathcal{D}({\phi}_T^{-1}(x))}{\int_{D_c=\phi_T^{-1}(Z_c)} \mathcal{D}(y)\ dy}\\
    = &|\det(\nabla \phi_T^{-1}(x))| \mathcal{D}_c({\phi}_T^{-1}(x)) = \mathcal{Z}_c(x).
\end{split}
\end{equation}
\end{proof}

\section{More Details on Experimental Results}\label{app:ddim}
This appendix provides supplementary details for \cref{sec:experiments}.

\subsection{Cross-Level Model Accuracy Heatmap}\label{app:heatmap}
We report the full cross-level accuracy heatmap for \cref{acc_heatmap_ddim} with per-cell numeric values (See \cref{fig:nn_complete}). We also include the corresponding heatmap obtained by replacing the MLP with an LDA classifier, which turns out almost identical, suggesting that the observed predictability is \emph{intrinsic} rather than an artifact of the classifier choice (See \cref{fig:lda_complete}).

\begin{figure*}[t!]
  \centering
  \begin{subfigure}[t]{0.64\textwidth}
    \centering
    \includegraphics[width=\linewidth]{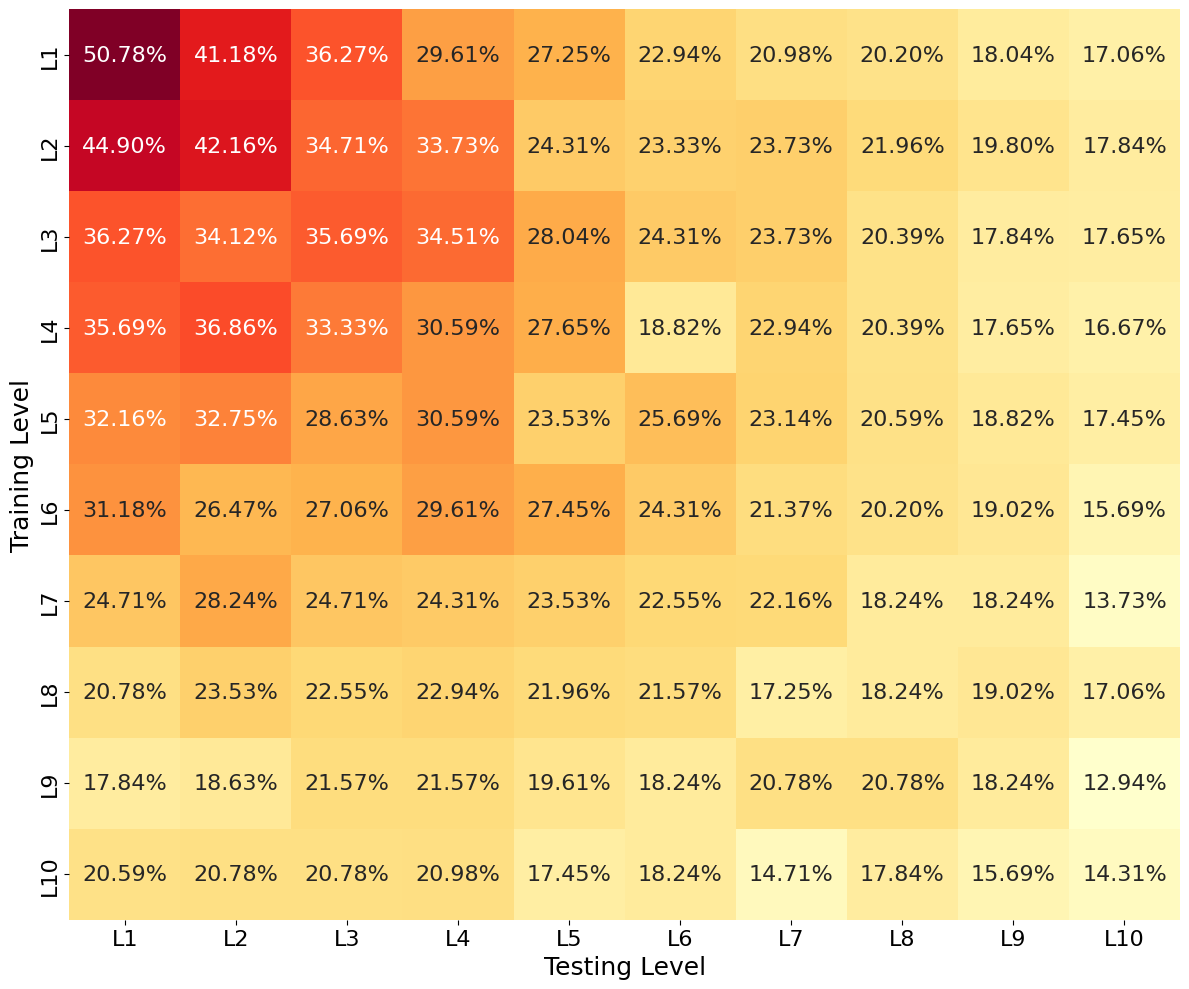}
    \caption{Neural network model}
    \label{fig:nn_complete}
  \end{subfigure}
  \hspace{0\textwidth}
  \begin{subfigure}[t]{0.64\textwidth}
    \centering
    \includegraphics[width=\linewidth]{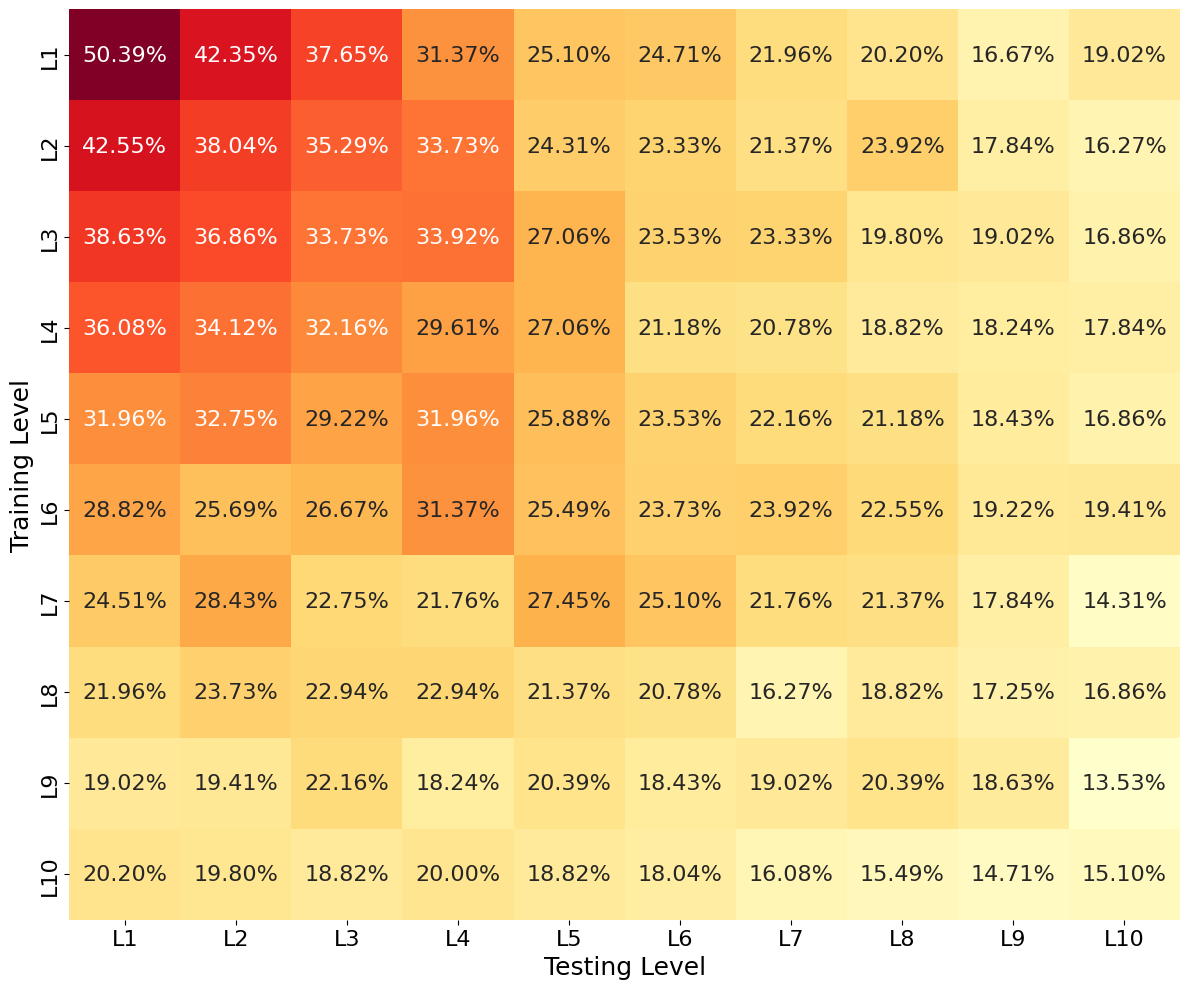}
    \caption{LDA model}
    \label{fig:lda_complete}
  \end{subfigure}
  \caption{Cross-level classification accuracy heatmaps for diffusion noise across confidence levels (with per-cell accuracies) through the (a) neural network model and (b) LDA model.}
  \label{fig:heatmap_complete}
\end{figure*}

\subsection{LDA--UMAP Embedding}\label{app:lda-umap}
We include the full LDA–UMAP embedding of latent seeds across all confidence levels for \cref{lda_umap_a}, which reveals a smooth transition in latent geometry from ordered structure at high confidence to increasingly chaotic organization at low confidence (See \cref{fig:lda_umap_all_levels}).

\begin{figure*}[t!]
  \centering
  \includegraphics[width=\linewidth]{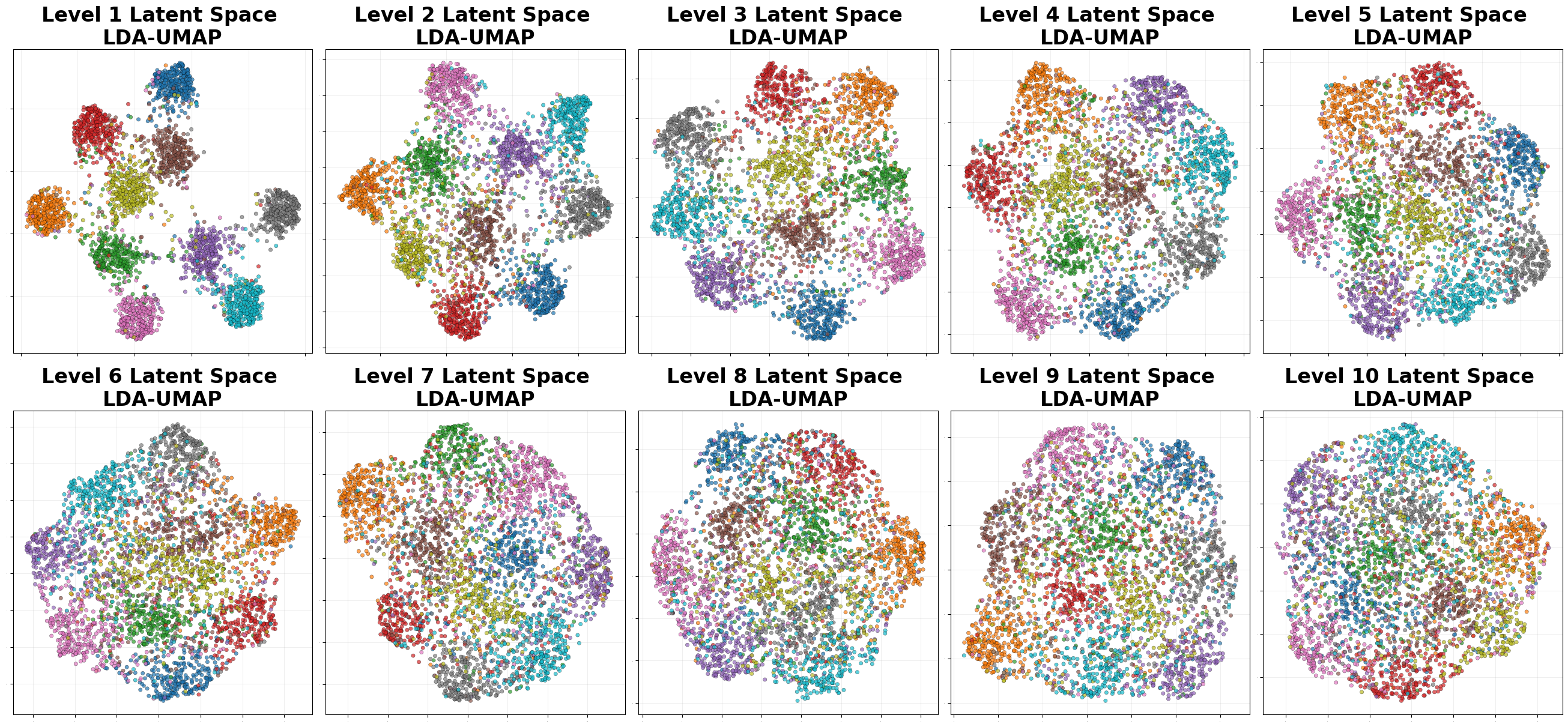}
  \caption{LDA--UMAP embeddings of latent seeds across all confidence levels.}
  \label{fig:lda_umap_all_levels}
\end{figure*}

\section{Latent Structure Disappearance with Stochastic Models}\label{app:ddpm}
A deterministic sampler is essential for latent classification, since it induces a well-defined map from a seed to a generated sample (and hence to a label), making it meaningful to discuss latent structure. To validate this point, we repeat the experiments in \cref{sec:experiments} after replacing DDIM with DDPM sampling. As anticipated, stochasticity breaks the seed-to-class correspondence: both the cross-level label predictability (\cref{acc_heatmap_ddpm}, or \cref{fig:heatmap_ddpm_complete} for details) and the LDA--UMAP embeddings (\cref{umap_lda_comparison_ddpm}) collapse, showing neither stable transfer across confidence levels, nor coherent class separation. In short, under stochastic sampling, the latent structure becomes effectively unlearnable.

\begin{figure*}[t!]
  \centering
  \begin{subfigure}[t]{0.49\textwidth}
    \centering
    \includegraphics[width=\linewidth]{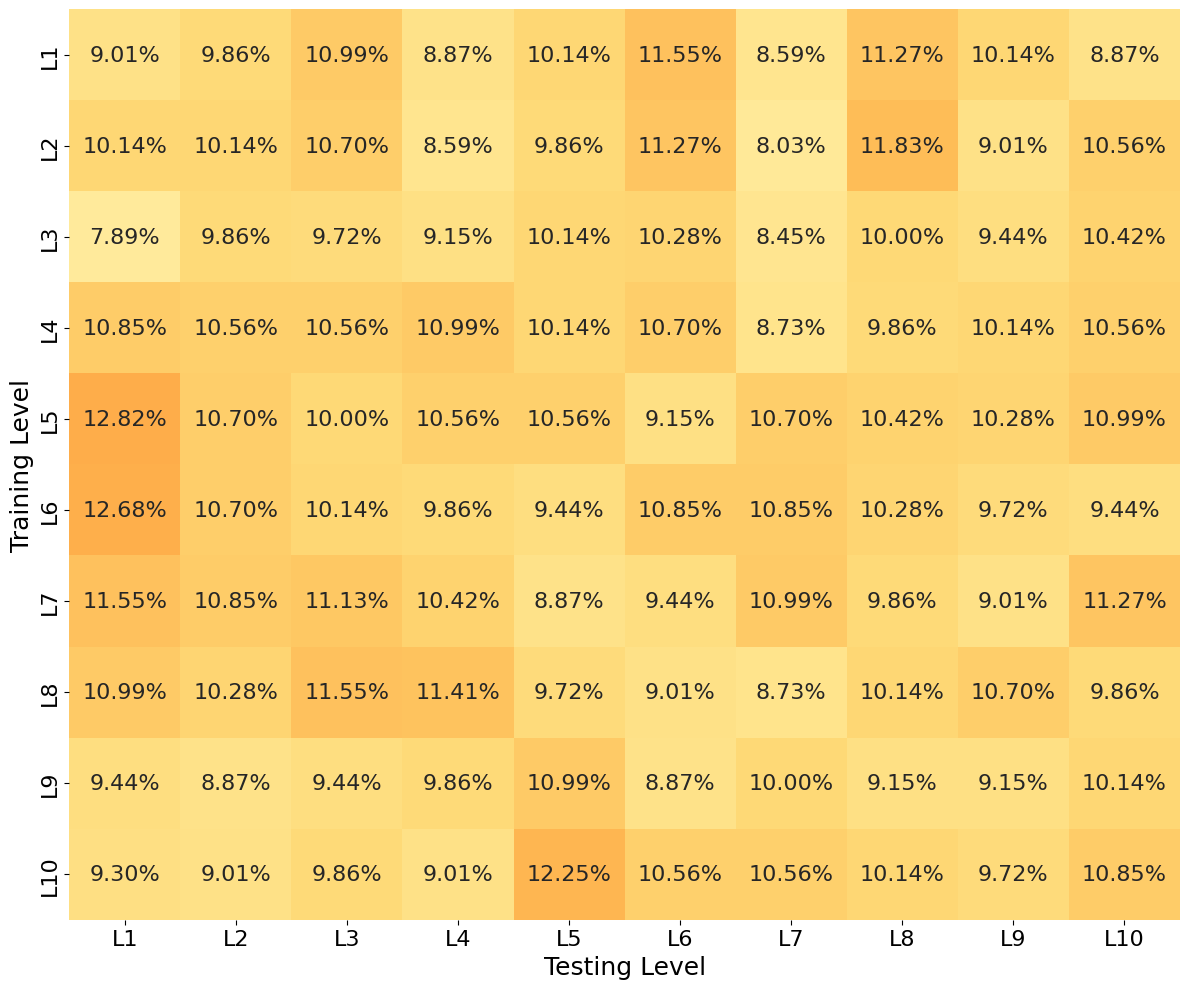}
    \caption{Neural network model}
  \end{subfigure}
  \hspace{0\textwidth}
  \begin{subfigure}[t]{0.49\textwidth}
    \centering
    \includegraphics[width=\linewidth]{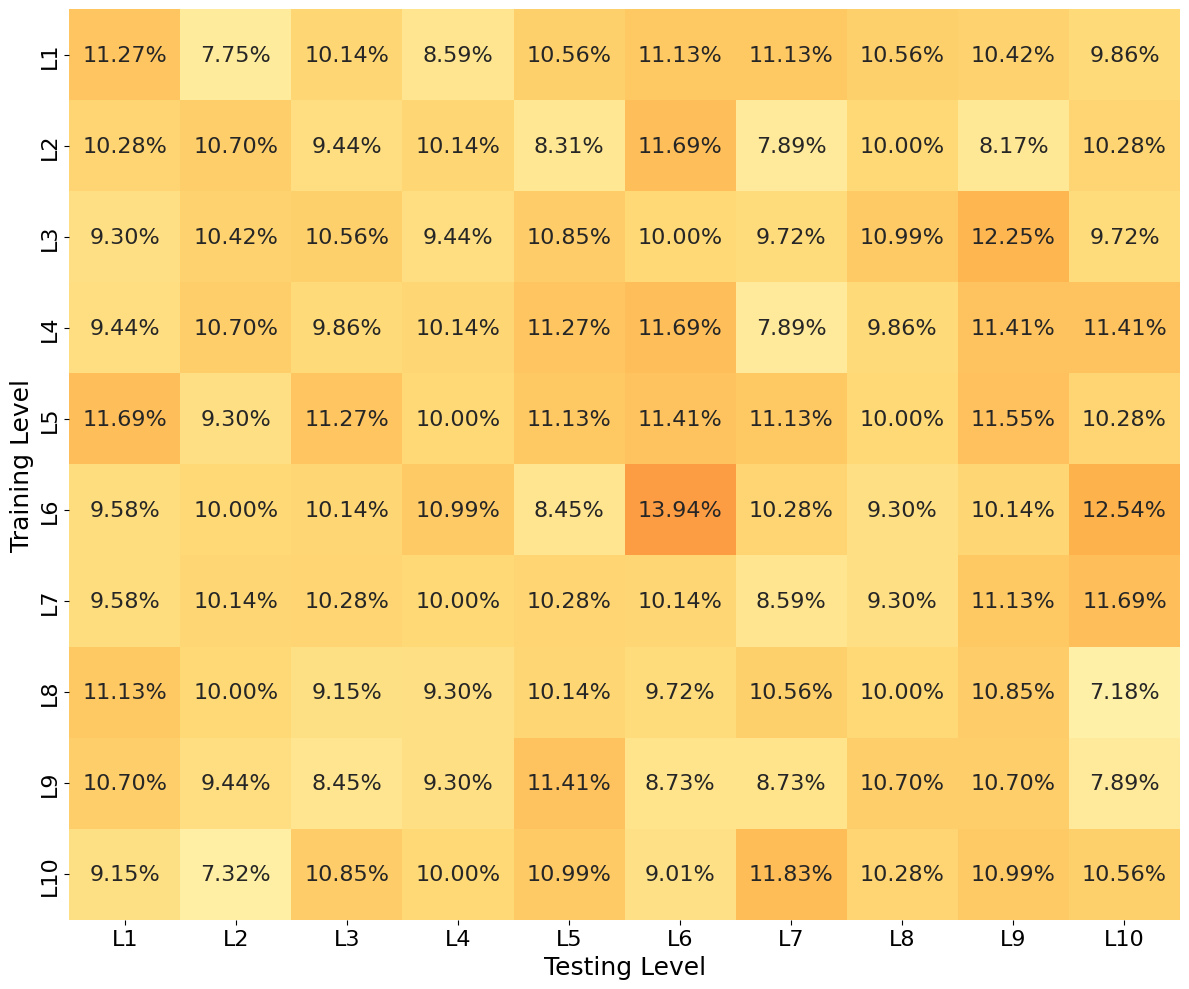}
    \caption{LDA model}
  \end{subfigure}
  \caption{Cross-level classification accuracy heatmaps for diffusion noise across confidence levels (with per-cell accuracies) under DDPM setting through the (a) neural network model and (b) LDA model. Both models exhibit a heterogeneous accuracy across levels with no recognizable structure.}
  \label{fig:heatmap_ddpm_complete}
\end{figure*}

\begin{figure*}[h!]
  \centering
  \includegraphics[width=0.9\textwidth, ]{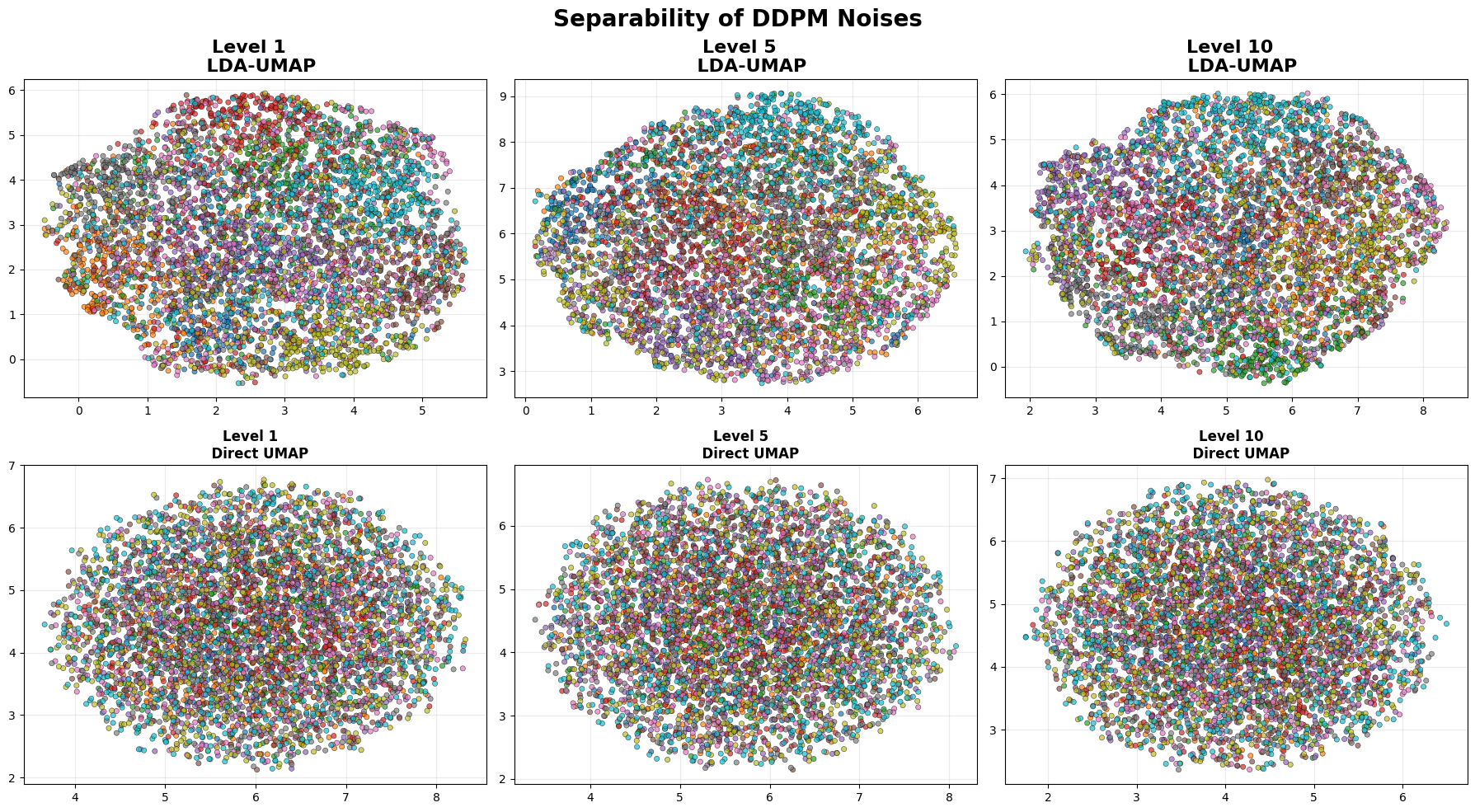}
  \caption{Separability in DDPM noises fails to appear under LDA-UMAP and UMAP only embedding.}
  \label{umap_lda_comparison_ddpm}
\end{figure*}

\section{LDA-UMAP Pipeline on Denoised Images} We apply the LDA–UMAP pipeline introduced in \cref{sec:experiments} to images generated by the DDIM diffusion model. For each confidence level, we sample 3,500 denoised images, with 350 images per class label, yielding a balanced dataset across levels and classes.

As expected, denoised images exhibit strong intrinsic class-aligned clustering, with clustering strength decreasing as confidence levels drop.
In this case, LDA–UMAP pipeline reveals class-aligned organization more clearly at lower confidence levels than UMAP alone, although both methods indicate degraded structure relative to high-confidence samples.
\begin{figure*}[t!]
  \centering
  \begin{subfigure}[t]{0.95\textwidth}
    \centering
    \includegraphics[width=\linewidth]{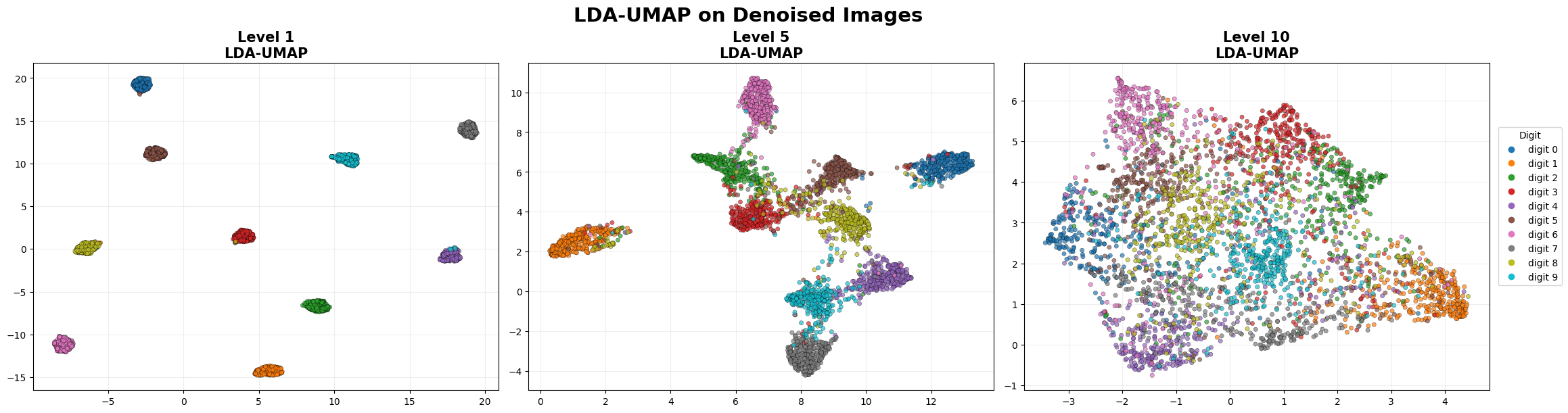}
  \end{subfigure}  
  \vspace{0.3cm}
  
  \begin{subfigure}[t]{0.95\textwidth}
    \centering
    \includegraphics[width=\linewidth]{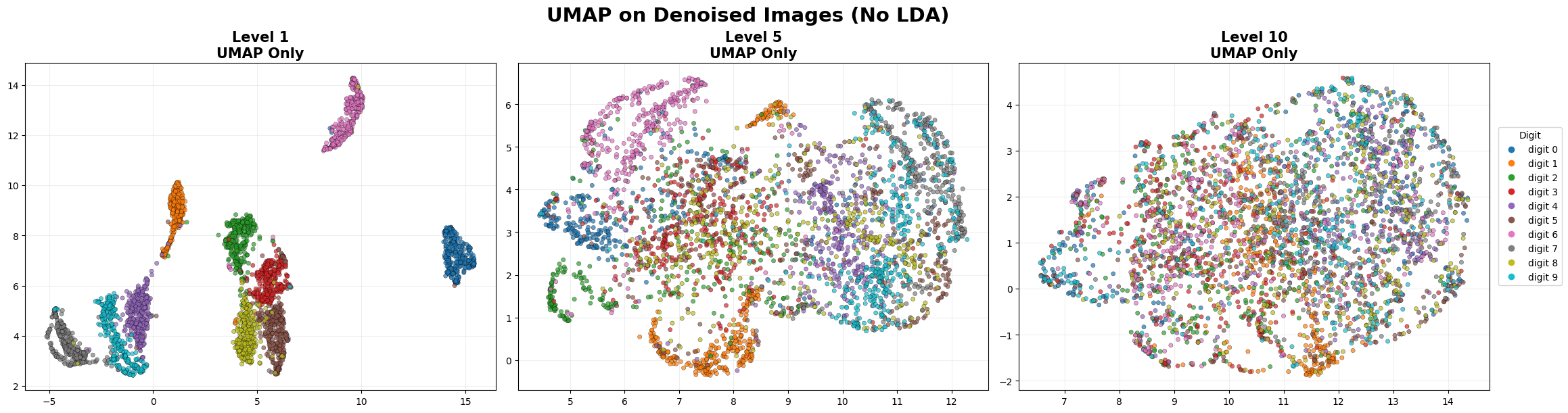}
  \end{subfigure}
   \caption{LDA--UMAP and UMAP-only embeddings of denoised DDIM images across confidence levels. 
Image space exhibits strong class-aligned clustering that weakens with decreasing confidence. 
LDA--UMAP reveals clearer separation at low confidence, while UMAP-only provides an unsupervised control.}
\end{figure*}

\section{Diversity in High-Confidence Seeds}\label{app:diversity}
Our \emph{Accuracy vs.\ Predicted Confidence} pipeline in \cref{subsec:predictability} conditions on high-confidence seeds (level~1). To verify that this selection does not collapse within-class diversity, we visualize the denoised samples per label generated from level~1 seeds and compare them with classifier-free guidance (CFG) targeting the same label.

\begin{figure*}[t!]
  \centering
  \begin{subfigure}[t]{0.6\textwidth}
    \centering
    \includegraphics[width=\linewidth]{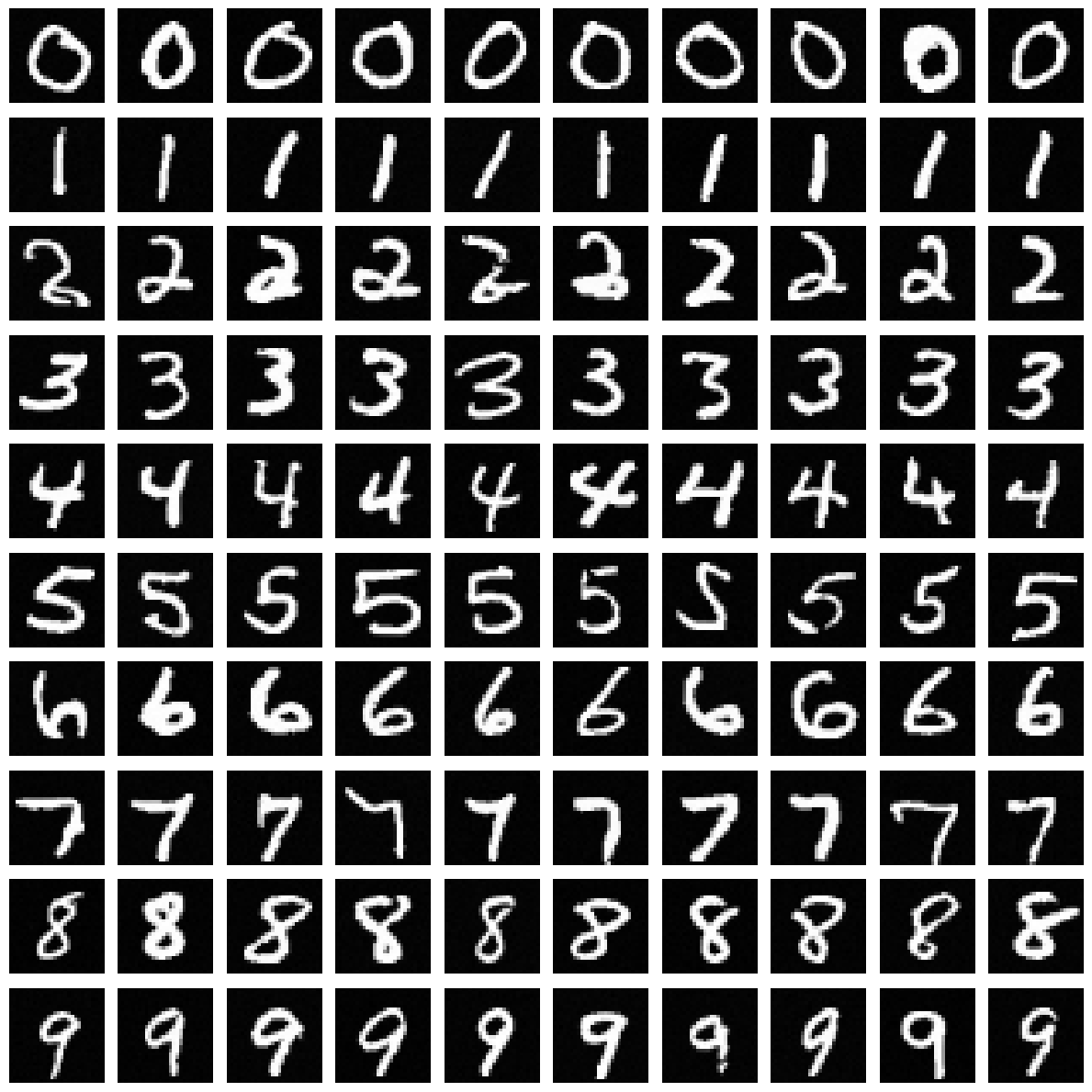}
    \caption{Within-class diversity from high-confidence (level~1) seeds under unconditional denoising ($10$ samples per label).}
    \label{fig:diversity_level1}
  \end{subfigure}

  \vspace{0.6em}

  \begin{subfigure}[t]{0.6\textwidth}
    \centering
    \includegraphics[width=\linewidth]{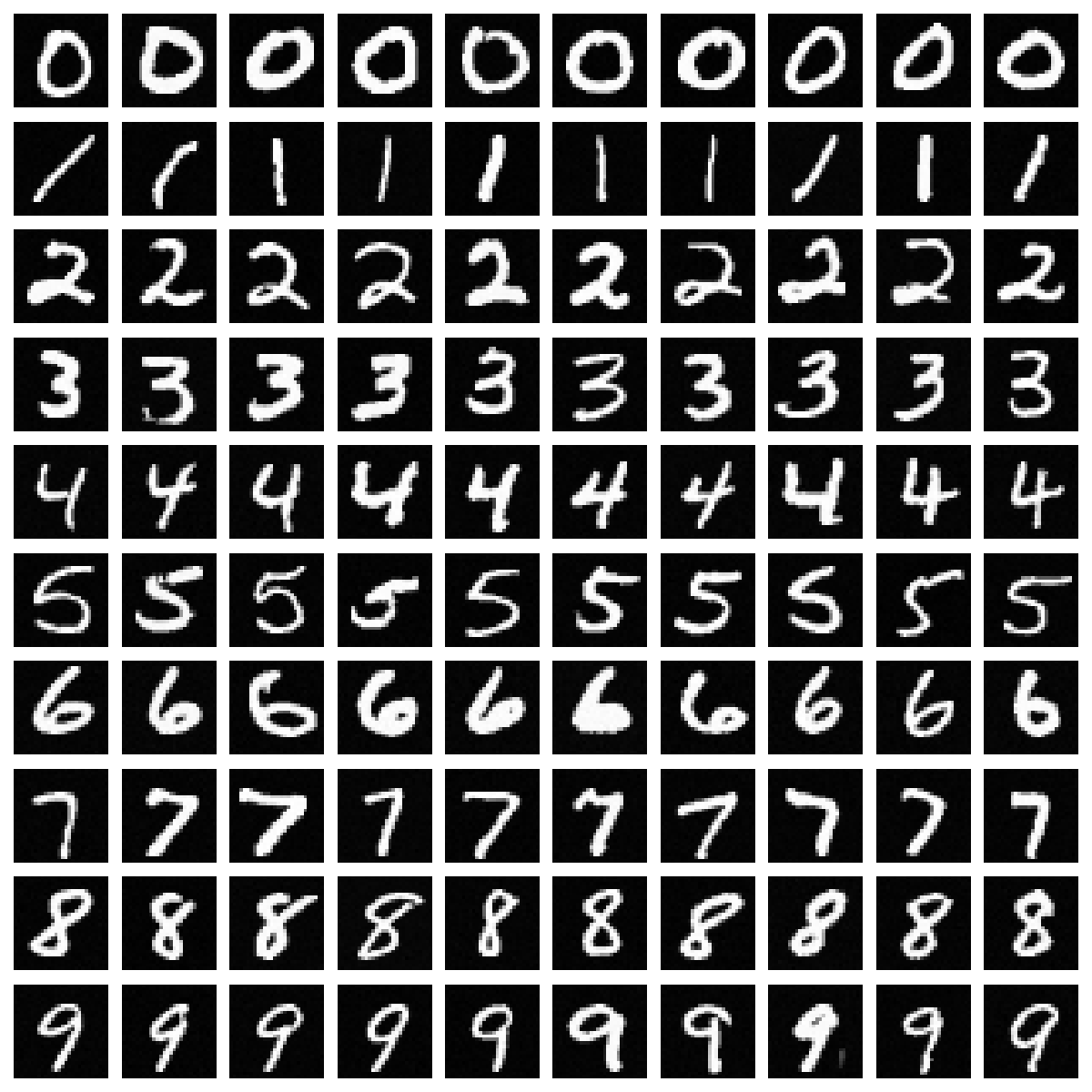}
    \caption{Within-class diversity under (conditional) classifier-free guidance (CFG) (with guidance scale $=3$) ($10$ samples per label).}
    \label{fig:diversity_cfg}
  \end{subfigure}

  \caption{Diversity comparison between confidence-based selection and classifier-free guidance.}
  \label{fig:diversity_compare}
\end{figure*}


\end{document}